\def\bi{\begin{itemize}}
\def\ei{\end{itemize}}
\def\bn{\begin{enumerate}}
\def\en{\end{enumerate}}
\def\bq{\begin{eqnarray}}
\def\eq{\end{eqnarray}}
\def\be{\begin{equation}}
\def\ee{\end{equation}}
\def\bea{\begin{eqnarray}}
\def\eea{\end{eqnarray}}
\def\beann{\begin{eqnarray*}}
\def\eeann{\end{eqnarray*}}
\def\bsea{\begin{subeqnarray}}
\def\esea{\end{subeqnarray}}
\def\bmat{\left[ \begin{array}}
\def\emat{\end{array} \right]}
\newfont{\BB}{msbm10}
\newfont{\bb}{msbm8}
\newcommand{\bmx}{\begin{matrix}}
\newcommand{\emx}{\end{matrix}}
\newcommand{\ba}{\begin{array}}
\newcommand{\ea}{\end{array}}
\def\nn{\nonumber}
\def\bq{\begin{eqnarray}}
\def\eq{\end{eqnarray}}
\def\bsmat{\left[ \begin{smallmatrix}}
\def\esmat{\end{smallmatrix} \right]}
\begin{document}


\rhbooktitle{Swarm Intelligence - From Concepts to Applications}

\markboth{Swarm Intelligence - From Concepts to Applications}{Bounded Distributed Flocking Control of  Nonholonomic Mobile Robots}

\cauthor{Thang Nguyen\thanks{Advanced Robotics and Automation (ARA) Lab, Department of Computer Science and Engineering, University of Nevada, Reno, NV89557, USA (e-mail: thangnthn@gmail.com)},
Hung M. La\thanks{Advanced Robotics and Automation (ARA) Lab, Department of Computer Science and Engineering, University of Nevada, Reno, NV89557, USA (e-mail:hla@unr.edu)},
Vahid Azimi\thanks{School of Electrical and Computer Engineering, Georgia Institute of Technology, 777 Atlantic Drive NW
Atlanta, GA 30332-0250, USA (e-mail: vahid.azimi@gatech.edu)}, and
Thanh-Trung  Han\thanks{Faculty of Electrical and Electronics Engineering, Ton Duc Thang University, Ho Chi Minh City, Vietnam (e-mail: hanthanhtrung@tdt.edu.vn)}
}

\chapter{Bounded Distributed Flocking Control of Nonholonomic Mobile Robots}

There have been numerous studies on the problem of flocking control for multiagent systems whose simplified models are presented in terms of point-mass elements. Meanwhile, full dynamic models pose some challenging problems in addressing the flocking control problem of mobile robots due to their nonholonomic dynamic properties. Taking practical constraints into consideration, we propose a novel approach to distributed flocking control of nonholonomic mobile robots by bounded feedback. The flocking control objectives consist of velocity consensus, collision avoidance, and cohesion maintenance among mobile robots. A flocking control protocol which is based on the information of neighbor mobile robots is constructed. The theoretical analysis is conducted with the help of a Lyapunov-like function and graph theory. Simulation results are shown to demonstrate the efficacy of the proposed distributed flocking control scheme.


\section{INTRODUCTION}
\label{introduction}
\noindent

It is well-known that the collective behavior of self-propelled organisms constitutes flocking \cite{TonerTu98}. The coherent motion of the flock inspires various research on flocking control of multiagent systems. A typical objective is to achieve a desired collective motion which can be produced by a constructive flocking control procedure.  For numerous models, which are described from simplest models such as point-mass models to actual physical models, design protocols have been systematically proposed for multiagent systems \cite{La_TC_2013, Filiberto_RNC3687}. Several control strategies were also addressed  in noisy environments where the agent's position is affected by noise \cite{La_ICRA10, La_JC,Dang_MFI}. With point-mass models, the problem of flocking control of multiple agents has been addressed with typical results reported in \cite{Saber06,tanner2007flocking,CuckerSmale07,DongHuang15,ghapani2016fully}. For a wide range of engineering applications, extensive studies in flocking control of mobile robots have been done in various scenarios \cite{moshtagh2007distributed,HanGe15, La_SMCA_2015}.

In this chapter, we study the problem of distributed flocking control of mobile robots by bounded feedback, which takes into consideration nonholonomic nature of mobile robots as well as the implementation issue posed by the physical limit of the motor speed. Our flocking control problem employs the full dynamic model of the mobile robot derived in \cite{AmarMohamed13}. Similar to \cite{Han2016,Nguyen2016}, due to the nonholonomic property of the dynamics of mobile robots, our proposed design framework constructed to achieve velocity consensus is modular. In other words, the consensuses on the linear speed and orientation angles are obtained separately. 

In this chapter, we are interested in agents with nonholonomic dynamics and boundedness constraints. Specifically, a coordination function is proposed to ensure that the induced attractive and repulsive forces are bounded, and hence can be incorporated in the bounded velocity control. Using the results of Barbalat's lemma and graph theory, the theoretical analysis is conducted, which shows that the maximal value of the coordination function determines the basin of attraction for the flocking convergence.

In this chapter, graph theory will be employed as in the case of nearest neighbor communication \cite{JadbabaieLinMorse03, Saber06}. We will employ the velocity control law reported in \cite{Han2016,Nguyen2016} in a decentralised sense, which helps to avoid collision and maintain a linear speed consensus. In addition, the orientation consensus will be achieved using a modified approach, which is inspired by the one in \cite{LiangLee2005}, where the input constraint is taken into account.

The organization of the chapter is as follows. Section\ref{relatedwork} summarises some research work in the literature related to the topic in this chapter. In Section \ref{problemformulation}, the multiple-goal control problem for flocking of nonholonomic mobile robots and preliminaries are introduced.  Section \ref{mainresults} describes main results where a modular design framework is proposed for bounded velocity control and bounded orientation control and the theoretical analyses are introduced. In Section \ref{avoidance_obstacles}, a description of an obstacle avoidance scheme is presented. Section \ref{simulation} shows some simulation results. Section \ref{conclusions} concludes the chapter by some conclusions.

\emph{Notations:} ${\mathbb R}$ and ${\mathbb R}^+$ are the sets of real numbers and nonnegative real numbers, respectively; for $q = [q_1, \ldots, q_n]^T$, $\nabla_q = [\partial/\partial q_1, \ldots, \partial/\partial q_n]^T$ is the del operator \cite{RileyHobsonBence2006}; for two vectors $a$ and $b$, $a \cdot b$ is their scalar product; $ (a_1,\ldots, a_n)$ is $[a_1^T,\ldots,a_n^T]^T$; $|\cdot|$ is the absolute value of scalars; and $\|\cdot\|$ is the Euclidean norm of vectors.

\section{RELATED WORK}
\label{relatedwork}

In many applications, the mission carried out by a single complicated robotic system can be equivalently completed by a coordination of a mobile robotic system with much simpler configurations, whose advantages lie in scalability, flexible deployment, cheaper costs, reliability, etc. Therefore, more sophisticated tasks can be fulfilled using a group of small mobile robots with lower cost and higher efficiency than a complex unit; see \cite{La_TC_2013,TannerJadbabaiePappas04,LiangLee2006,Nguyen2014,La_SMCA_2015,nguyen2015formation,
cruz2007decentralized,nguyen2017distributed,pham2017distributed,LaShengICRA2009,LaShengACC2009,LaShengIROS2009,
NguyenLaTeague2015,LaShengChen2013,la2011cooperative,LaSMC2009,la2010hybrid,LaCTACIS2013,Jafari2015,la2015multi,
la2011flocking} and references therein.

Flocking control of mobile robots was widely addressed with different control schemes; see \cite{JadbabaieLinMorse03, CuckerSmale07, carrillo2010asymptotic,LaLimSheng15,TannerJadbabaiePappas04, SepulchrePaleyLeonard07} and references therein. Recently, a new measure-theoretic approach which systematically provides a framework for obtaining flocking protocols for mobile robots was reported in \cite{HanGe15}. 

The common assumption in many papers is the availability of information of all agents or all-to-all communication. Numerous control protocols for mobile robots have been constructed based on this assumption. This centralized communication control architecture yields  inflexibility and large computation costs for the controller of each agent, especially when the number of agents is large. Meanwhile, a distributed control protocol can offer an ease of implementation and less computational burden as each element of the system needs only the information of neighbor agents. In this direction, a range of decentralized control schemes for mobile robots have been proposed \cite{JadbabaieLinMorse03,tanner2007flocking,zavlanos2009hybrid,dong2011flocking,DongHuang15}.

For a wide range of engineering applications, cohesion maintenance and collision avoidance (CMCA) properties of a mobile robotic system are of importance. As reported in \cite{TannerJadbabaiePappas04, MoshtaghMichaelJadbabaieDaniilidis09}, the attractive and repulsive forces cannot be included in the control for CMCA of mobile robots, as it is possible for point-mass agents \cite{Saber06}. In \cite{HanGe15}, desired attractive and repulsive forces for CMCA of mobile robots was achieved using a new rearrangement strategy. In \cite{JadbabaieLinMorse03, Saber06,DongHuang15}, the graph theory was employed to generate control protocols that maintain CMCA of multiagent systems with double integrator models.

In \cite{LiangLee2005}, a distributed flocking control approach was proposed but no constraints on the control inputs are imposed. The work in \cite{Han2016,Nguyen2016} considers the bounded feedback flocking control problem for nonholonomic mobile robots without a flocking desired heading angle. The problem of interest in this chapter is to address bounded control of nonholonomic dynamic mobile vehicles, which also achieves CMCA and obstacle avoidance. We also consider a flocking desired heading angle, which reveals a collective flocking behaviour.

\section{PROBLEM FORMULATION}
\label{problemformulation}

Similarly to \cite{Han2016,Nguyen2016}, we investigate a collective system of $N$ identical autonomous mobile robots whose respective equations of motion are \cite{AmarMohamed13}
\begin{align}
\label{mobiledynamics}
  \dot q_i & = v_ie(\theta_i) \nonumber\\
  \dot \theta_i & = w_i \nonumber\\
  \dot v_i & = u_i \nonumber\\
  \dot w_i & = \tau_i
\end{align}
where  $i=1, . . . , N$, $q_i = [x_i, y_i]^T \in \mathbb{R}^2$, and $\theta_i \in \mathbb{R}$
are respectively the position and the heading angle of the $i$-th robot in the inertial frame $Oxy$; $v_i \in \mathbb{R}$ is the linear speed, and $e(\theta_i)$ is the unit vector $[\cos\theta_i, \sin \theta_i]^T$ ; $w_i \in \mathbb{R}$ is the angular speed, and $u_i, \tau_i \in \mathbb{R}$ are control inputs.

Following the same vein as in \cite{Han2016}, we define $0<r_0<R_0$. Then, the flocking control problem for (\ref{mobiledynamics}) is to construct the control inputs $u_i,\,\tau_i$ as bounded functions of the collective state $(q_1, \ldots, q_N, \theta_1, \ldots, \theta_N, v_1, \ldots, v_N, w_1, \ldots, w_N)$ in a distributed fashion to satisfy the following multiple goals 
\begin{itemize}
  \item[G1)] \emph{Velocity consensus:}
  \begin{align}
    \lim_{t\to\infty}(\dot q_i(t) - \dot q_j(t)) = 0, \forall i,j = 1, \ldots, N
  \end{align}
  \item[G2)] \emph{Collision avoidance:} $r_{ij}(t) = \|q_i(t) - q_j(t)\| \ge r_0, \forall t \ge 0, \forall i \ne j$
  \item[G3)] \emph{Cohesion maintenance:} $r_{ij}(t) \le R_0, \forall t \ge 0, \forall i \ne j$.
\end{itemize}

Similarly to \cite{Han2016,Nguyen2016}, we have the following definition.
\begin{definition}
\label{QAVbc.bcdef}
 A control $\dot \zeta  = g(\zeta,y),   u  = c(\zeta,y), (\zeta,y) \in \mathbb{R}^d\times \mathbb{R}^m$ of a system $\dot x  = f(x,u), y = h(x,u)$ is said to be bounded if there is a finite constant $M > 0$ such that $\|c(\zeta,y)\| \le M, \forall (\zeta,y) \in \mathbb{R}^d\times \mathbb{R}^m$.
\end{definition}

To achieve the goals G2) and G3), we consider the coordination function 
$U:\mathbb{R}^+\to\mathbb{R}^+$ which satisfies the following properties:
\begin{itemize}
  \item[P1)] there is a constant $U_M > 0$ such that
  \begin{align} 0 \le U(r) \le U_M, \forall r \in \mathbb{R}
  \end{align}
  \item[P2)] $U(r)$ is continuously differentiable on $[r_0, R_0]$;
  \item[P3)] $\lim\limits_{r \to r_0^+}U(r) = U_M$; and
  \item[P4)]$\lim\limits_{r \to R_0^-} U(r) = U_M$.
\end{itemize}

For a link between agents $i$ and $j$ of the flock, we aim to maintain $r_{ij}(t) \in [r_0, R_0]$. Without loss of generality, we assume that $U(0) = 0$ and hence $U(r)$ is well defined for $r_{ii} = 0$ \cite{Han2016}.

We are interested in the function $U$ with the dead zone $[a,A]$ since even distribution of agents may not be achievable by a common coordination function $U$. Accordingly, we use the zone $[a,A]$ for free alignment. A function $U$ satisfying the above requirements is shown in Figure \ref{flk.sim.ecflocking}.

\begin{figure}[!t]
\includegraphics[width = \columnwidth]{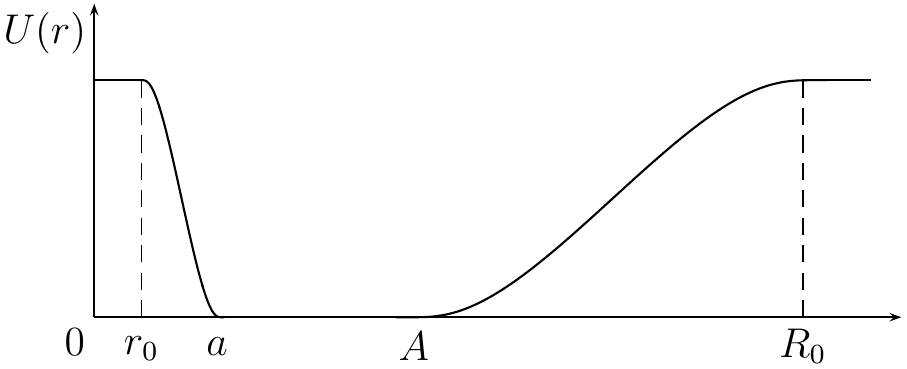}
\caption{Coordination function (extracted from \cite{Han2016}).} \label{flk.sim.ecflocking}
\end{figure}


For bounded control, we shall use the  linear saturation functions  $\sigma_1$, and $\sigma_2$, which are continuous and nondecreasing functions and satisfy, for given positive constants $L_i \le M_i, i = 1,2$ \cite{Han2016}
\begin{itemize}
  \item[i)] $\sigma_i(-s) = -\sigma_i(s)$ for all $s$;
  \item[ii)] $\sigma_i(s) = s$ for $s \le L_i$; and
  \item[iii)] $|\sigma_i(s)| \le M_i, \forall s \in \mathbb{R}$.
\end{itemize}

Similarly to other works on distributed for multiagent systems  \cite{JadbabaieLinMorse03, Saber06,DongHuang15,Nguyen2016}, the graph theory will be utilised to address our problem. A digraph associated with (\ref{mobiledynamics}) is called $\mathcal{G}(t)=(\mathcal{V},\mathcal{E}(t))$ where $\mathcal{V}={1,\dots, N}$ and $\mathcal{E}\subseteq \mathcal{V} \times \mathcal{V}$. The set $\mathcal{V}$ is denoted as the node set of $\mathcal{G}(t)$ and the set $\mathcal{E}(t)$ is defined as the edge set of $\mathcal{G}(t)$. In addition, $\mathcal{N}_i(t)$ denotes the neighbor set of the node $i$ for $i=1, \dots, N$.

As in \cite{DongHuang15}, the description of the edge $\mathcal{E}(t)$ is presented as follows.

Given any $R>0$, $\epsilon_2 \in (0,R)$, and $\epsilon_1 \in (0,R-\epsilon_2)$, for any $t\geq 0$, $\mathcal{E}(t)=\{(i,j) |i,j \in \mathcal{V}\}$ is defined such that 
\begin{enumerate}
\item $\mathcal{E}(0)=\{(i,j) | \epsilon_1 < \|q_i(0) -q_j(0)\| < (R-\epsilon_2)\}$;
\item if $\|q_i(0) -q_j(0)\| \geq R$, then $(i,j)\notin \mathcal{E}(t)$;
\item for $i= 1,\dots, N$, $j=1, \dots, N$, if $(i,j)\notin \mathcal{E}(t^-)$ and $\|q_i(t) -q_j(t)\| < R-\epsilon_2$, then $(i,j) \in \mathcal{E}(t)$;
\item for $i= 1,\dots, N$, $j=1, \dots, N$, if $(i,j)\in \mathcal{E}(t^-)$ and $\|q_i(t) -q_j(t)\| < R$, then $(i,j) \in \mathcal{E}(t)$.
\end{enumerate}

As in \cite{Nguyen2016}, the following results will be employed for the main results.
\begin{lemma}
\label{lemmasum}
  Let $\sigma:\mathbb{R} \to \mathbb{R}$ be a function satisfying $\sigma(-s) = -\sigma(s), \forall s \in \mathbb{R}$. Then, for all $a_i, b_i$, it holds true that
  \begin{align}
  \label{full.lemsum}
    \frac{1}{2}\sum_{i=1}^N\sum_{j\in \mathcal{N}_i(t)}(a_i - a_j)\sigma(b_i - b_j) = \sum_{i=1}^N\sum_{j\in \mathcal{N}_i(t)}a_i\sigma(b_i - b_j).
  \end{align}
  \vspace{0pt}
\end{lemma}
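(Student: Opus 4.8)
The plan is to expand the bilinear difference $(a_i - a_j)\sigma(b_i - b_j)$ on the left-hand side into two double sums and then exploit two structural facts: the odd symmetry of $\sigma$ and the symmetry of the neighbor relation induced by the edge set. First I would write
\begin{align}
&\frac{1}{2}\sum_{i=1}^N\sum_{j\in \mathcal{N}_i(t)}(a_i - a_j)\sigma(b_i - b_j) \nonumber\\
&\quad = \frac{1}{2}\sum_{i=1}^N\sum_{j\in \mathcal{N}_i(t)}a_i\sigma(b_i - b_j) - \frac{1}{2}\sum_{i=1}^N\sum_{j\in \mathcal{N}_i(t)}a_j\sigma(b_i - b_j). \nonumber
\end{align}
The first term is already half of the claimed right-hand side, so the whole argument reduces to showing that the second term contributes the remaining half, i.e. that the second double sum equals $-\sum_{i}\sum_{j\in\mathcal{N}_i(t)}a_i\sigma(b_i-b_j)$.

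Next I would relabel the dummy indices $i \leftrightarrow j$ in that second double sum. This is where the key structural property enters: since the edge set $\mathcal{E}(t)$ is defined entirely through the symmetric quantities $\|q_i(t)-q_j(t)\|$, the underlying interaction graph is undirected at each fixed $t$, so $j \in \mathcal{N}_i(t)$ if and only if $i \in \mathcal{N}_j(t)$. Under the swap the set of ordered pairs over which we sum is therefore preserved, giving
\begin{align}
\sum_{i=1}^N\sum_{j\in \mathcal{N}_i(t)}a_j\sigma(b_i - b_j) = \sum_{i=1}^N\sum_{j\in \mathcal{N}_i(t)}a_i\sigma(b_j - b_i). \nonumber
\end{align}
Applying the hypothesis $\sigma(b_j - b_i) = -\sigma(b_i - b_j)$ converts the right-hand side into $-\sum_{i}\sum_{j\in\mathcal{N}_i(t)}a_i\sigma(b_i-b_j)$. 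Substituting this back, the second term in the expansion becomes $+\tfrac{1}{2}\sum_{i}\sum_{j\in\mathcal{N}_i(t)}a_i\sigma(b_i-b_j)$, which combined with the first term yields the full right-hand side and closes the identity.

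The computation is otherwise routine; the only point requiring care is the justification that $j\in\mathcal{N}_i(t) \Leftrightarrow i\in\mathcal{N}_j(t)$, which I expect to be the main obstacle to state cleanly rather than to prove. I would stress that this symmetry is inherited directly from the rules defining $\mathcal{E}(t)$, each of which is phrased in terms of the symmetric distance $\|q_i(t)-q_j(t)\|$; once the graph is recognized as undirected for every fixed $t$, the index relabeling is legitimate and the identity holds for arbitrary scalars $a_i, b_i$ with no further hypotheses beyond the oddness of $\sigma$.
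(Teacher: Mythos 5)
Your proposal is correct and follows essentially the same route as the paper's own proof: expand the bilinear term $(a_i-a_j)\sigma(b_i-b_j)$ into two double sums, then use the oddness of $\sigma$ together with the undirectedness of $\mathcal{G}(t)$ (i.e.\ $j\in\mathcal{N}_i(t)\Leftrightarrow i\in\mathcal{N}_j(t)$) to relabel indices and show that the $a_j$-sum equals the negative of the $a_i$-sum. The only difference is cosmetic (you relabel first and apply oddness second, the paper does the reverse), and your explicit remark that the undirectedness is inherited from the symmetric distances defining $\mathcal{E}(t)$ is a point the paper assumes without comment.
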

\begin{proof}
 Since $\sigma(-s) = -\sigma(s)$ and $\mathcal{G}(t)$ is an undirected graph, we have
  \begin{align}
   \sum_{i=1}^N\sum_{j\in \mathcal{N}_i(t)}  a_j\sigma(b_i - b_j) & = - \sum_{i=1}^N\sum_{j\in \mathcal{N}_i(t)}  a_j\sigma(b_j - b_i) \nonumber\\
   & = - \sum_{i=1}^N\sum_{j\in \mathcal{N}_i(t)} a_i\sigma(b_i - b_j).
  \end{align}
  Hence,
  \begin{align}
    &\sum_{i=1}^N\sum_{j\in \mathcal{N}_i(t)}(a_i - a_j)\sigma(b_i - b_j) \nonumber\\
    & \quad = \sum_{i=1}^N\sum_{j\in \mathcal{N}_i(t)}a_i \sigma(b_i - b_j) - \sum_{i=1}^N\sum_{j\in \mathcal{N}_i(t)}a_j\sigma(b_i - b_j) \nonumber\\
    & \quad = 2\sum_{i=1}^N\sum_{j\in \mathcal{N}_i(t)}a_i\sigma(b_i - b_j)
  \end{align}
  which implies (\ref{full.lemsum}).
\end{proof}	

\begin{remark}
Lemma \ref{lemmasum} plays an important role in the theoretical analysis of the main results. Here, the lemma is similar to the one in \cite{Han_CASE}. However,  \cite{Han_CASE} considers all-to-all communication in the multiagent system. Our problem in this chapter is focused on the distributed fashion, which requires the employment of the neighbour set $\mathcal{N}_i(t)$ of robot $i$.
\end{remark}

\begin{lemma}\label{lemma2}
  The linear saturation functions $\sigma_i, i = 1, 2, 3$ satisfy
  \begin{align}
  \label{full.lemin0}
    (\sigma_i(\theta_1) - \sigma_i(\theta_2))\sigma_i(\theta_1 - \theta_2) \ge 0, \forall \theta_1, \theta_2.
  \end{align}
  \vspace{0pt}
\end{lemma}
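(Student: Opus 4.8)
The plan is to rely on only two of the listed properties of the linear saturation functions: that each $\sigma_i$ is nondecreasing, and that each $\sigma_i$ is odd. From oddness, setting $s=0$ in $\sigma_i(-s)=-\sigma_i(s)$ gives $\sigma_i(0)=-\sigma_i(0)$, hence $\sigma_i(0)=0$. The key observation is that both factors in the product, namely $\sigma_i(\theta_1)-\sigma_i(\theta_2)$ and $\sigma_i(\theta_1-\theta_2)$, always carry the same sign as the difference $\theta_1-\theta_2$, so their product cannot be negative.

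First I would split on the sign of $\theta_1-\theta_2$. If $\theta_1\ge\theta_2$, then monotonicity of $\sigma_i$ yields $\sigma_i(\theta_1)-\sigma_i(\theta_2)\ge 0$; at the same time $\theta_1-\theta_2\ge 0$ together with monotonicity and $\sigma_i(0)=0$ gives $\sigma_i(\theta_1-\theta_2)\ge\sigma_i(0)=0$. Hence both factors are nonnegative and their product is $\ge 0$. The case $\theta_1<\theta_2$ is handled symmetrically: monotonicity gives $\sigma_i(\theta_1)-\sigma_i(\theta_2)\le 0$, and since $\theta_1-\theta_2<0$ we get $\sigma_i(\theta_1-\theta_2)\le\sigma_i(0)=0$, so again the product of two nonpositive numbers is $\ge 0$. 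Combining the two cases establishes (\ref{full.lemin0}) for every pair $\theta_1,\theta_2$ and for each $i$.

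Since the argument uses only monotonicity and $\sigma_i(0)=0$, I do not expect any genuine obstacle; the statement is essentially the elementary fact that a nondecreasing odd function preserves the sign of its argument. A slightly more compact alternative that avoids the explicit case split is to note that monotonicity forces $\sigma_i(\theta_1)-\sigma_i(\theta_2)$ and $\theta_1-\theta_2$ to share a sign, while the odd and nondecreasing property forces $\sigma_i(\theta_1-\theta_2)$ and $\theta_1-\theta_2$ to share a sign; transitivity of ``same sign'' then yields the claim. The only point meriting care is the boundary behaviour when $\theta_1=\theta_2$, where $\sigma_i(\theta_1-\theta_2)=\sigma_i(0)=0$ and the product is exactly $0$, which is consistent with the non-strict inequality.
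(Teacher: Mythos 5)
Your proof is correct and follows essentially the same route as the paper's: both use monotonicity to show each factor shares the sign of $\theta_1-\theta_2$ (with $\sigma_i(0)=0$) and then multiply; the paper simply invokes ``without loss of generality'' for the case $\theta_1\ge\theta_2$ where you write out both cases and additionally justify $\sigma_i(0)=0$ from oddness.
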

\begin{proof}
  Without loss of generality, suppose that $\theta_1 \ge \theta_2$. Since $\sigma_i$ are nondecreasing functions, this implies that
  \begin{align}
  \label{full.lemin1}
    \sigma_i(\theta_1) - \sigma_i(\theta_2) \ge 0.
  \end{align}
  Furthermore, as $\sigma_i(0) = 0$, $\theta_1 \ge \theta_2$ and the nondecreasing property of $\sigma_i$ imply that 
  \begin{align}
  \label{full.lemin2}
    \sigma_i(\theta_1 - \theta_2) \ge 0.
  \end{align}
  Multiplying (\ref{full.lemin1}) and (\ref{full.lemin2}) side-by-side, we obtain (\ref{full.lemin0}).	
\end{proof}

\section{MAIN RESULTS}
\label{mainresults}

Our constructive strategy is to design $u_i$ to achieve consensus on $v_i$, and $\tau_i$ to achieve consensus on $\theta_i$. The design for $u_i$ is derived from \cite{Nguyen2016}, while the construction for $\tau_i$ is built based on the approach in \cite{LiangLee2005}.

Note that $U(r_{ij}) = U(\|q_i - q_j\|)$, which is the symmetric function of $q_i$ and $q_j$. As a result, we write $U(q_i, q_j)$ with the understanding that $U(q_i, q_j) = U(q_j, q_i)$. The control protocols $u_i$ and $\tau_i$ are constructed based on Lyapunov theory. Specifically, a positive definite function $V$ is presented such that the time derivative of $V$ is a negative definite function. Regarding the distribution control problem, the graph theory will be employed to show the connectivity preservation for our multiagent system. Then, we apply the LaSalle's invariance principle \cite{Khalil02} to conclude the desired consensuses.

Similarly to \cite{Nguyen2016}, the initial state of the collective system of agents (\ref{mobiledynamics}) is chosen such that the graph $\mathcal{G}(0)$ is connected. The parameters of the graph $\mathcal{G}(0)$  are chosen as follows
\bea
	R&=&R_0,\\
	r_0&\leq&\epsilon_1<a,\\
	0&<&\epsilon_2\leq R_0-a.
\eea

\subsection{Speed consensus and connectivity preservation}

The derivation of this subsection is essentially similar to the control design for the linear speed in \cite{Nguyen2016}; hence, it is presented here for completeness.
Consider the energy function for system (\ref{mobiledynamics})
\be
  V_1  =  \frac{1}{2}\sum_{i=1}^N\sum_{j\in \mathcal{N}_i(t)} U(q_i,q_j) + \frac{1}{2}\sum_{i=1}^Nv_i^2.
\ee

We assume that $U(r)$ is designed such that
\be
	U(R_0)=U_M> V_{1max},
\ee
where
\be
	V_{1max}\triangleq \frac{1}{2}\sum_{i=1}^N v^2(0)+\frac{N(N-1)}{2} U(R_0-\epsilon_2).
\ee

Let $m_0$ be the number of the links of the initial graph. The simplest connected graph of $N$ agents is a tree whose number of links is $n-1$. Hence, $m_0\geq n-1$. Let
\be\label{V10}
	V_1(0)\leq V_{1max}-\frac{(N-1)(N-2)}{2} U(R_0-\epsilon_2).
\ee

Note that $U(q_i,q_j)$ is a symmetric function of $q_i$ and $q_j$. We compute the derivative of $V_1$ with respect to (\ref{mobiledynamics})
\bea
\label{dotV11stcomp}
 \nn \dot V_1  &=&\sum_{i=1}^N\sum_{j\in \mathcal{N}_i(t)} \nabla_{q_i} U(q_i,q_j) \cdot \dot q_i + \sum_{i=1}^N v_i u_i \\  		&=&\sum_{i=1}^N v_i\Bigg(\sum_{j\in \mathcal{N}_i(t)} \nabla_{q_i} U(q_i,q_j)\cdot e(\theta_i)+u_i\Bigg).
\eea

From (\ref{dotV11stcomp}), a control law for the speed consensus protocol is chosen as 
\begin{align}
\label{proto.1sdesign.speed}
  u_i & = -\sum_{j\in \mathcal{N}_i(t)} \nabla_{q_i} U(q_i,q_j)\cdot e(\theta_i) - \sum_{j\in \mathcal{N}_i(t)} \sigma_1(v_i - v_j) 
\end{align}
where $\sigma_1$ is the linear saturation function introduced in Section \ref{problemformulation}.

Substituting (\ref{proto.1sdesign.speed}) into (\ref{dotV11stcomp}), we obtain
\begin{align}
\label{re.dotV1}
  \dot V_1 & = -\sum_{i=1}^N\sum_{j\in \mathcal{N}_i(t)} v_i \sigma_1(v_i - v_j).
\end{align}

We have the following speed consensus theorem \cite{Nguyen2016}.
\begin{theorem}\label{thms}
  Suppose that the collective system (\ref{mobiledynamics}) subject to the protocol (\ref{proto.1sdesign.speed}) is initiated such that $V_1(0) < V_{1max}$. Then, the following properties hold:
  \begin{itemize}
 \item[i)] $\mathcal{G}(t)$ is connected for all $t\geq 0$ and there exists $t_k$ such that for $t\geq t_k$, $\mathcal{G}(t)=\mathcal{G}(t_k)$
\item[ii)] Collision avoidance is guaranteed, i.e. $\|q_i-q_j\|>r_0$ for all $i,j\in N$ and $i\ne j$.
\item[iii)]$\lim\limits_{t\to\infty}(v_i(t) - v_j(t)) = 0$
  \end{itemize}
  \vspace{3pt}
\end{theorem}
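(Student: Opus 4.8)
The plan is to run a Lyapunov/LaSalle argument built on $V_1$ together with the switching structure of $\mathcal{G}(t)$, handling the three claims in the order i), ii), iii). First I would verify that $V_1$ is nonincreasing along solutions while the edge set is held fixed. Applying Lemma \ref{lemmasum} to (\ref{re.dotV1}) with $a_i=b_i=v_i$ and $\sigma=\sigma_1$ rewrites the derivative in the symmetric form $\dot V_1=-\tfrac12\sum_{i=1}^N\sum_{j\in\mathcal{N}_i(t)}(v_i-v_j)\sigma_1(v_i-v_j)$. Since $\sigma_1$ is odd, nondecreasing and satisfies $\sigma_1(0)=0$, the value $\sigma_1(s)$ has the same sign as $s$, so each summand $(v_i-v_j)\sigma_1(v_i-v_j)\ge 0$ and hence $\dot V_1\le 0$ on every interval of constant $\mathcal{G}(t)$.

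I expect the crux to be connectivity preservation. The key observation is that, because the double sum counts each link twice, $V_1\ge U(q_i,q_j)$ for every current link $(i,j)$. By P4), $U\to U_M$ as $\|q_i-q_j\|\to R_0^-$, while the standing assumption $U_M=U(R_0)>V_{1max}$ forces $V_1<U_M$; hence no existing link can reach length $R_0$, so by edge rule 4 no link is ever deleted. The delicate part is controlling $V_1$ across link \emph{additions}: by rule 3 a new link is created only at separation below $R_0-\epsilon_2$, contributing at most $U(R_0-\epsilon_2)$ to $V_1$ (the budget encoded in the definition of $V_{1max}$), and since links are only created, never destroyed, at most $\binom{N}{2}-m_0\le\frac{(N-1)(N-2)}{2}$ additions can occur. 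I would therefore argue inductively over the switching times $0=t_0<t_1<\cdots$ that $V_1(t)\le V_1(0)+\frac{(N-1)(N-2)}{2}U(R_0-\epsilon_2)\le V_{1max}<U_M$, invoking (\ref{V10}) at the last inequality and $\dot V_1\le 0$ between switches. This simultaneously keeps $V_1$ below $U_M$ (so no deletion can occur, closing the induction) and bounds the number of additions, yielding a $t_k$ after which $\mathcal{G}(t)=\mathcal{G}(t_k)$; connectivity is then inherited from the connected $\mathcal{G}(0)$ since no link is lost.

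Collision avoidance follows by the mirror-image argument. The parameter choices $r_0\le\epsilon_1<a$ and $0<\epsilon_2\le R_0-a$ give $r_0<R_0-\epsilon_2$, so any pair whose separation drops below $R_0-\epsilon_2$ is necessarily a link (rule 3), and once a link it is never removed. Were the separation of such a pair to approach $r_0^+$, property P3) would give $U\to U_M$ and hence $V_1\ge U_M$, contradicting $V_1<U_M$ established above. Therefore $\|q_i-q_j\|>r_0$ for all $i\ne j$ and all $t\ge 0$.

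Finally, for speed consensus I would invoke LaSalle's invariance principle on $[t_k,\infty)$, where the graph is fixed and the closed loop autonomous. The sublevel set $\{V_1\le V_1(t_k)\}$ confines the velocities through $\tfrac12\sum_i v_i^2\le V_1$ and, via the barrier behaviour of $U$, confines all inter-agent distances of the connected fixed graph to $[r_0,R_0]$, so the relative trajectories evolve in a compact positively invariant set. On the largest invariant set inside $\{\dot V_1=0\}$ one has $(v_i-v_j)\sigma_1(v_i-v_j)=0$ across every link, i.e.\ $v_i=v_j$ along each edge; connectivity of $\mathcal{G}(t_k)$ propagates this to every pair, giving $\lim_{t\to\infty}(v_i(t)-v_j(t))=0$. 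The one technical point to treat carefully is that the absolute (centroid) position is not confined by $V_1$, so I would phrase LaSalle in the relative-position/velocity variables — or alternatively apply Barbalat's lemma to the convergent, lower-bounded $V_1$ after checking uniform continuity of $\dot V_1$ — in order to legitimately conclude convergence of the velocity differences.
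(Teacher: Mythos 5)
Your proposal is correct and follows essentially the same route as the paper's proof: the same Lyapunov function $V_1$, Lemma \ref{lemmasum} to obtain $\dot V_1 \le 0$ between switches, induction over the switching times with the link-addition budget encoded in (\ref{V10}) to rule out edge deletion and freeze the graph at some $t_k$, and Barbalat/LaSalle together with connectivity of $\mathcal{G}(t_k)$ to conclude speed consensus. Your write-up is in fact tighter than the paper's at three points: you treat collision avoidance for \emph{non-linked} pairs explicitly (via edge rule 3 and $r_0 < R_0-\epsilon_2$, whereas the paper's barrier argument only directly covers pairs that are already links), you do the cumulative rather than per-switch bookkeeping of link additions that the induction actually requires, and you flag the uniform-continuity/compactness hypothesis needed to legitimately invoke Barbalat's lemma or LaSalle, which the paper applies without verification.
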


\begin{proof}
Assume that $\mathcal{G} (t)$ switches at time $t_k$ $(k=1,2,\dots)$. Hence, $G(t)=G(0)$ for all $t\in [0, t_1)$. In other words,
\bea
\nn	\mathcal{G}(t)&=&G(0), \text{ } t\in[0, t_1)\\
	\mathcal{G}(t_1) &\ne& G(0).
\eea
We prove that $G(0) \subset G(t_1)$. Using control law (\ref{proto.1sdesign.speed}), we have
\be\label{dotV1}
	 \dot V_1= -\sum_{i=1}^N v_i \sum_{j\in \mathcal{N}_i(t)} \sigma_1(v_i - v_j). 
\ee

According to Lemma \ref{lemmasum}, 
\be
	\dot V_1= -\frac{1}{2}\sum_{i=1}^N  \sum_{j\in \mathcal{N}_i(t)} (v_i-v_j) \sigma_1(v_i - v_j). 
\ee

Because $\sigma_1(s)$ is an odd function, $(v_i-v_j) \sigma_1(v_i - v_j) \geq 0$. Thus, $\dot V_1 \leq 0$, which deduces
\be
	V_1(t) \leq V_1(0)< V_{1max} <U_M \text{ for } [0,t_1).
\ee
From the definition of $U(r)$, $U(R_0)>V_{1max}\geq V_1(0)$. Hence for any $(i,j) \in \mathcal{G}(t)$ for $t\in [0,t_1)$
\be\label{Up}
	U(q_i,q_j)\leq V_1(t)<U_M=U(r_0)=U(R_0).
\ee
By the continuity of $U(r)$, (\ref{Up}) shows that $r_0<\|q_i-q_j\|<R_0$. This implies that no existing links are deleted at time $t_1$ and collision avoidance is achieved. As a result, new links must be added to the current graph at the switching time $t_1$. We assume that there are $m_1$ new links being added to the network at time $t_1$. Since the number of the current links before switching is $m_0\geq N-1$ and the complete graph possesses $\frac{N(N-1)}{2}$ edges, $m_1 \leq \frac{N(N-1)}{2}-(N-1)=\frac{(N-1)(N-2)}{2}$. Hence, we have
\be
	V_1(t_1)=V_1(t_1^-)+m_1 U(R_0-\epsilon_2).
\ee
Due to (\ref{V10}),
\be
	V_1(t_1^-)\leq V_1(0) < V_{1max}-\frac{(N-1)(N-2)}{2} U(R_0-\epsilon_2).
\ee
Thus,
\be
	V_1(t_1)< V_{1max}.
\ee
By induction, for $t \in [t_{k-1}, t_k)$,
\be
	\dot V_1= -\frac{1}{2}\sum_{i=1}^N  \sum_{j\in \mathcal{N}_i(t)} (v_i-v_j) \sigma_1(v_i - v_j)\leq0, 
\ee
and therefore $V_1(t)\leq V_1(t_{k-1})\leq V_{1max}$. This shows that no edges are lost at time $t_k$ and $V_1(t_k)\leq V_{1max}$. As a result, the size of the set of the links of $\mathcal{G}(t)$ forms an increasing sequence, bounded above by $\frac{N(N-1)}{2}$, which is the number of the links of a complete graph. Thus, there exists a finite integer $k>0$ such that
\be\label{Gk}
	\mathcal{G}(t)=\mathcal{G}(t_k), \text{ } t\in [t_k,\infty).
\ee
Hence, for $t\geq t_k$, we have
\be\label{dotV12}
	\dot V_1= -\frac{1}{2}\sum_{i=1}^N  \sum_{j\in \mathcal{N}_i(t_k)} (v_i-v_j) \sigma_1(v_i - v_j)\leq 0. 
\ee

Next, we will show that the linear velocities of all agents converge to the same value. Using the fact that $U(q_i,q_j)\leq V_1(t) \leq V_{1max}<U_M$ and the properties of $U$, we deduce $\|q_i-d_j\|>r_0$. This shows that no collision takes place among agents. Since $0\leq V_1(t) \leq V_{1max}$ and $\dot V_1 \leq 0$, by Barbalat's lemma, $\lim_{t\rightarrow \infty} \dot V_1(t)=0$. Because the graph $\mathcal{G}(t)$ is connected for all $t$ and $s\sigma_1(s) \geq 0$ for all $s$, from (\ref{dotV12}),
\be
	\lim_{t\rightarrow \infty} (v_i - v_j)=0, \text{ for all } i,j=1,2, \dots, N.
\ee

\end{proof}

\begin{remark}
The proof of the theorem follows similar approaches as in \cite{ZavlanosCDC2007,DongHuang15}, where graph theory was employed as a means for proving the connectivity of mobile networks. The potential in this chapter is similar to the one in \cite{DongHuang15} in the sense that it is bounded. In contrast, the potential function used in \cite{ZavlanosCDC2007} goes to infinity at singularities. Note that the mobile robots in this work are nonholonomic while \cite{ZavlanosCDC2007,DongHuang15} addressed double integrator systems.
\end{remark}

\begin{remark}
The first sum in (\ref{proto.1sdesign.speed}) consists of the gradients of $U(q_i,q_j)$ and the unit vector $e(\theta_i)$ which are bounded by definition. The second sum in (\ref{proto.1sdesign.speed}) is comprised of $\sigma_1(.)$, which is a linear saturation function defined in Section \ref{problemformulation}. Hence, as a whole, the control law (\ref{proto.1sdesign.speed}) for each agent is bounded. This satisfies our objective on the boundedness of the control input $u_i$.
\end{remark}

Theorem \ref{thms} shows that the design (\ref{proto.1sdesign.speed}) achieves speed consensus and the goals G2) and G3). In the next subsection, we will design $\tau_i$ for orientation consensus completing the goal G1).

\subsection{Orientation Consensus}
Motivated by the orientation consensus design method presented in \cite{LiangLee2005}, we shall develop a bounded control approach which employs a saturation function in Section \ref{problemformulation}.

Define the orientation trajectory error for agent $i$ as
\be
	e_i=\theta_i-\theta_r,
\ee
where $\theta_r$ is the desired orientation of the flock. Thus, the angle difference between two agents $i$ and $j$ is
\be
	\theta_i-\theta_j=\theta_i-\theta_r-(\theta_j-\theta_r)=e_i-e_j.
\ee

Similarly to \cite{LiangLee2005}, the following lemma is employed for our convergence analysis.
\begin{lemma}
\label{lemmaei}
Suppose that the flock possesses a graph $\mathcal{G}(t)$, then the trajectory error signals of the group have the
following property:
  \begin{align}
  \label{lemmaei_equ}
    \frac{1}{2}\sum_{i=1}^N\sum_{j\in \mathcal{N}_i(t)}(e_i - e_j)(\dot{e}_i - \dot{e}_j) = \sum_{i=1}^N\sum_{j\in \mathcal{N}_i(t)}e_i(\dot{e}_i - \dot{e}_j).
  \end{align}
  \vspace{0pt}
\end{lemma}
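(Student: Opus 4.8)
The plan is to recognize that (\ref{lemmaei_equ}) is precisely the identity of Lemma \ref{lemmasum} specialized to the choice $a_i = e_i$, $b_i = \dot{e}_i$, and $\sigma$ equal to the identity map $\sigma(s) = s$. The identity map is odd, since $\sigma(-s) = -s = -\sigma(s)$, so it satisfies the sole hypothesis imposed on $\sigma$ in Lemma \ref{lemmasum}. With these substitutions the term $\sigma(b_i - b_j)$ appearing in (\ref{full.lemsum}) becomes $\dot{e}_i - \dot{e}_j$, and both sides of (\ref{full.lemsum}) reduce verbatim to the two sides of (\ref{lemmaei_equ}). The lemma would then follow immediately by invoking Lemma \ref{lemmasum}, so no new machinery is required.

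For completeness I would also give the self-contained argument, which mirrors the proof of Lemma \ref{lemmasum}. Expanding the left-hand side of (\ref{lemmaei_equ}) as
\[
  \frac{1}{2}\sum_{i=1}^N\sum_{j\in \mathcal{N}_i(t)} e_i(\dot{e}_i - \dot{e}_j) - \frac{1}{2}\sum_{i=1}^N\sum_{j\in \mathcal{N}_i(t)} e_j(\dot{e}_i - \dot{e}_j),
\]
the task reduces to showing that the second double sum equals the negative of the first. Here I would use that $\mathcal{G}(t)$ is undirected, so that $j \in \mathcal{N}_i(t)$ if and only if $i \in \mathcal{N}_j(t)$; relabelling the summation indices $i \leftrightarrow j$ in the second sum therefore sends $e_j(\dot{e}_i - \dot{e}_j)$ to $e_i(\dot{e}_j - \dot{e}_i) = -e_i(\dot{e}_i - \dot{e}_j)$. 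Substituting back leaves
\[
  \frac{1}{2}\sum_{i=1}^N\sum_{j\in \mathcal{N}_i(t)} e_i(\dot{e}_i - \dot{e}_j) + \frac{1}{2}\sum_{i=1}^N\sum_{j\in \mathcal{N}_i(t)} e_i(\dot{e}_i - \dot{e}_j),
\]
which is exactly the right-hand side of (\ref{lemmaei_equ}).

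Since both routes are elementary, I expect no genuine obstacle. The one point that warrants care is the index-relabelling step, which is valid only because the neighbour relation is symmetric, i.e. the graph is undirected; I would therefore state that assumption explicitly, as it is implicit in the hypothesis that the flock possesses a graph $\mathcal{G}(t)$. Everything else is routine bookkeeping with finite sums.
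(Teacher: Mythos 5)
Your proposal is correct and matches the paper, whose entire proof is the remark that the argument is the same as for Lemma \ref{lemmasum}; your self-contained relabelling argument is exactly that argument spelled out. Your first route is in fact a slight sharpening worth keeping: the identity is not merely proved \emph{similarly} to Lemma \ref{lemmasum} but is its literal special case with $a_i = e_i$, $b_i = \dot{e}_i$, and $\sigma$ the (odd) identity map, so no proof needs to be repeated at all.
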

\begin{proof}
The proof is similar to the one employed in Lemma \ref{lemmasum}.
\end{proof}

We have the following orientation consensus theorem.
\begin{theorem}\label{thmo}
 Assume that the desired orientation $\theta_r$ and its first and second derivation are bounded, and the collective system (\ref{mobiledynamics}) is subject to the following protocol 
\be\label{tauidesign}
	\tau_i=\ddot{\theta}_r-\sigma_2(\dot{\theta}_i-\dot{\theta}_r)-\frac{k_{\theta}}{n_i+1}[(n_i+1)\theta_i-\sum_{j\in \mathcal{N}_i(t)} \theta_j-\theta_r],
\ee
where $n_i$ is the number of the neighbors of robot $i$ and $k_{\theta}$ is a positive parameter. Then, all the mobile robots eventually reach consensus on the heading angles $\theta_i$ in the sense that
  \begin{align}
    \lim\limits_{t\to\infty}(\theta_i(t) - \theta_j(t)) = 0, \forall i, j.
  \end{align}
  \vspace{3pt}
\end{theorem}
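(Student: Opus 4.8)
The plan is to reduce the problem to an \emph{autonomous} second-order system in the orientation errors $e_i=\theta_i-\theta_r$ over the time interval on which the communication graph has become time-invariant, and then to invoke LaSalle's invariance principle. First I would call on Theorem \ref{thms}: under the speed protocol the graph $\mathcal{G}(t)$ is connected for all $t$ and there is a finite $t_k$ with $\mathcal{G}(t)=\mathcal{G}(t_k)$ for all $t\ge t_k$. Restricting attention to $[t_k,\infty)$ is what guarantees that each neighbor count $n_i$, the neighbor sets $\mathcal{N}_i$, and the associated graph Laplacian $L$ are all constant, which is precisely what the subsequent energy analysis needs.

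Next I would substitute the protocol (\ref{tauidesign}) into $\ddot\theta_i=\tau_i$ and rewrite everything in terms of $e_i$. Writing $\theta_i=e_i+\theta_r$, a short computation collapses the bracketed term to $(n_i+1)\theta_i-\sum_{j\in\mathcal{N}_i(t)}\theta_j-\theta_r=e_i+\sum_{j\in\mathcal{N}_i(t)}(e_i-e_j)$, so that the $\theta_r$, $\dot\theta_r$, and $\ddot\theta_r$ contributions cancel exactly, leaving the closed loop
\[\ddot e_i=-\sigma_2(\dot e_i)-\frac{k_\theta}{n_i+1}\Big(e_i+\sum_{j\in\mathcal{N}_i(t)}(e_i-e_j)\Big).\]
The crucial and initially non-obvious point is that this system is autonomous in $(e,\dot e)$ despite $\theta_r$ being time-varying; this is exactly what licenses the classical (autonomous) LaSalle principle.

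To drive convergence I would propose the weighted Lyapunov candidate
\[V_2=\frac{1}{2}\sum_{i=1}^N(n_i+1)\dot e_i^2+\frac{k_\theta}{2}\sum_{i=1}^N e_i^2+\frac{k_\theta}{4}\sum_{i=1}^N\sum_{j\in\mathcal{N}_i(t)}(e_i-e_j)^2,\]
where the factor $(n_i+1)$ in the kinetic term is chosen precisely to absorb the $1/(n_i+1)$ gain in the dynamics. Differentiating along trajectories with $n_i$ constant, the potential terms assemble into perfect time derivatives: here I would invoke Lemma \ref{lemmaei} to convert $\sum_i\sum_j\dot e_i(e_i-e_j)$ into $\tfrac12\sum_i\sum_j(\dot e_i-\dot e_j)(e_i-e_j)$, which is the time derivative of the last term of $V_2$. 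What survives is $\dot V_2=-\sum_{i=1}^N(n_i+1)\dot e_i\sigma_2(\dot e_i)\le 0$, since $s\sigma_2(s)\ge0$ for the odd, nondecreasing saturation.

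Finally, since $V_2$ is positive definite and radially unbounded in $(e,\dot e)$, its sublevel sets are compact and invariant, so LaSalle applies. On $\{\dot V_2=0\}$ one has $\dot e_i=0$ for all $i$; invariance then forces $\ddot e_i=0$, i.e. $(I+L)e=0$. Because $L$ is positive semidefinite and $I$ is positive definite, $I+L$ is positive definite and hence nonsingular, so $e=0$ is the only point in the largest invariant set. Thus every trajectory converges to $e_i=0$, giving the stronger conclusion $\theta_i\to\theta_r$, and in particular $\theta_i-\theta_j\to0$ for all $i,j$. The main technical hurdles I anticipate are the two bookkeeping steps that make everything else routine: verifying the exact cancellation of the $\theta_r$ terms that renders the error dynamics autonomous, and selecting the $(n_i+1)$ weighting so that the $i$-dependent gains drop out of $\dot V_2$; the invariant-set step is then elementary linear algebra.
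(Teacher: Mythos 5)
Your proof is correct, and it departs from the paper's in two substantive ways. First, the Lyapunov function: you put the $(n_i+1)$ weight on the kinetic term, $\tfrac12\sum_i(n_i+1)\dot e_i^2$, and leave the potential terms with the uniform weight $k_\theta$, whereas the paper uses an unweighted kinetic term $\tfrac12\sum_i\dot e_i^2$ and weights $e_i^2$ and $(\theta_i-\theta_j)^2$ by $\tfrac{k_\theta}{n_i+1}$. This is more than cosmetic: the paper's cancellation of the cross terms invokes Lemma \ref{lemmaei} on double sums carrying the $i$-dependent factor $\tfrac{k_\theta}{n_i+1}$, but the relabeling argument behind that lemma requires the weight inside the double sum to be symmetric in $(i,j)$, which $\tfrac{1}{n_i+1}$ is not unless neighboring agents have equal degree. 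With your weighting, every double sum carries the constant factor $k_\theta$, the lemma applies verbatim, and $\dot V_2=-\sum_i(n_i+1)\dot e_i\sigma_2(\dot e_i)\le 0$ is airtight---in effect you have repaired a subtle gap in the paper's computation. Second, the convergence argument: you establish that the closed-loop error dynamics are autonomous on $[t_k,\infty)$ (using Theorem \ref{thms} to freeze $\mathcal{N}_i$ and $n_i$, and the exact cancellation of the $\theta_r$ terms), then apply LaSalle's invariance principle and the nonsingularity of $I+L$ to force $e=0$ on the largest invariant set. The paper instead applies Barbalat's lemma to get $\dot e_i\to 0$, then asserts that boundedness of $\ddot\theta_r$ yields $\ddot e_i\to 0$---a step that does not follow as stated (it would need, e.g., uniform continuity of $\ddot e_i$ and a second Barbalat argument)---before reading off $\theta_i\to\theta_j$ from the control law. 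Your route avoids that informal step and delivers the stronger conclusion $\theta_i\to\theta_r$ rather than mere pairwise agreement; both proofs share the same skeleton of fixed-graph reduction via Theorem \ref{thms}, a quadratic energy in $(e_i,\dot e_i)$, and cross-term cancellation via Lemma \ref{lemmaei}.
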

\begin{proof}
Consider the following Lyapunov function candidate
\be
V_2(t)=\frac{1}{2}\sum_{i=1}^N\frac{k_{\theta}}{n_i+1}e_i^2+\frac{1}{2}\sum_{i=1}^N\dot{e}_i^2+\frac{1}{4}\sum_{i=1}^N\sum_{j\in \mathcal{N}_i(t)}\frac{k_{\theta}}{n_i+1}(\theta_i-\theta_j)^2.
\ee
According to Theorem \ref{thms}, there exists $t_k$ such that for $t\geq t_k$, $\mathcal{G}(t)=\mathcal{G}(t_k)$.
The derivative of $V_2(t)$ with respect to $t$ for $t\geq t_k$ is given as
\bea
\nn	\dot{V}_2(t)&=&\sum_{i=1}^N\frac{k_{\theta}}{n_i+1}e_i\dot{e}_i+\sum_{i=1}^N\dot{e}_i(\tau_i-\ddot{\theta}_r)+
\sum_{i=1}^N\sum_{j\in \mathcal{N}_i(t)}\frac{k_{\theta}}{n_i+1}(\theta_i-\theta_j)(\dot{\theta}_i-\dot{\theta}_j)\\
\nn &=&\sum_{i=1}^N\frac{k_{\theta}}{n_i+1}e_i\dot{e}_i-\sum_{i=1}^N\frac{k_{\theta}}{n_i+1}\dot{e}_ie_i-\sum_{i=1}^N\dot{e}_i\sigma_2(\dot{e}_i)\\
\nn&&-\sum_{i=1}^N\sum_{j\in \mathcal{N}_i(t)}\frac{k_{\theta}}{n_i+1}\dot{e}_i(\theta_i-\theta_j)+
\frac{1}{2}\sum_{i=1}^N\sum_{j\in \mathcal{N}_i(t)}\frac{k_{\theta}}{n_i+1}(\theta_i-\theta_j)(\dot{\theta}_i-\dot{\theta}_j)\\
\nn&=&-\sum_{i=1}^N\dot{e}_i\sigma_2(\dot{e}_i)-\sum_{i=1}^N\sum_{j\in \mathcal{N}_i(t)}\frac{k_{\theta}}{n_i+1}\dot{e}_i(e_i-e_j)\\
&&+\frac{1}{2}\sum_{i=1}^N\sum_{j\in \mathcal{N}_i(t)}\frac{k_{\theta}}{n_i+1}(e_i-e_j)(\dot{e}_i-\dot{e}_j).
\eea
Using Lemma \ref{lemmaei}, we obtain
\be\label{dotV2}
	\dot{V}_2(t)=-\sum_{i=1}^N\dot{e}_i\sigma_2(\dot{e}_i)\leq 0
\ee
since $\sigma_2(.)$ is a nondecreasing function defined in Section \ref{problemformulation}. Since $\theta_i,\theta_r \in [-pi,pi]$ and  $\ddot{\theta}_r$ is bounded, the control law (\ref{tauidesign}) implies $\ddot{e}_i$ is bounded. By the Barbalat's lemma, from (\ref{dotV2}), $\dot{e}_i\rightarrow 0$. Also, since $\ddot{\theta}_r$ is bounded, $\ddot{e}_i\rightarrow 0$. Therefore, the control law (\ref{tauidesign}) implies that $\theta_i\rightarrow \theta_j$, which proves the theorem.
\end{proof}

\begin{remark}
The boundedness of the control law (\ref{tauidesign}) is guaranteed by the properties of linear saturation function $\sigma_2$ and  the fact that $\theta_i,\theta_j,\theta_r \in [-\pi,\pi]$. This demonstrates that the proposed control scheme meets the requirement on the physical limits of the control inputs.
\end{remark}

\begin{remark}
It should be noted that our control law for orientation consensus is similar to the one in \cite{LiangLee2005} but here the boundedness of the control input is taken into account. The scheme in this chapter also shares the same objective as the one in \cite{Nguyen2016} but offers a more simple form and implementation.
\end{remark}

Combining Theorems \ref{thms} and \ref{thmo}, we have the following bounded flocking theorem.
\begin{theorem}
  Suppose that the collective system (\ref{mobiledynamics}) is subject to the bounded protocols (\ref{proto.1sdesign.speed})  and (\ref{tauidesign}). Suppose further that the initial configuration of the collective system  (\ref{mobiledynamics})  is such that $\mathcal{N}(0)$ is connected. Then, all the multiple flocking goals of velocity consensus, cohesion maintenance, and collision avoidance are achieved.
  \vspace{3pt}
\end{theorem}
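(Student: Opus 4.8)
The plan is to treat this statement as a synthesis of Theorems \ref{thms} and \ref{thmo}, exploiting the fact that the two control channels are essentially decoupled. After substituting (\ref{proto.1sdesign.speed}), the speed/position certificate reduces to $\dot V_1=-\sum_{i=1}^N\sum_{j\in\mathcal{N}_i(t)}v_i\sigma_1(v_i-v_j)$, which does not involve the heading states, while the torque protocol (\ref{tauidesign}) depends only on $\theta_i$ and $\theta_r$ and not on $q_i$ or $v_i$. Consequently the two Lyapunov analyses may be run independently and then merged, so no joint stability argument is needed; this one-way decoupling is what makes the combination painless.

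First I would verify the hypotheses of Theorem \ref{thms}. The assumed connectivity of the initial graph together with the design assumption $U(R_0)=U_M>V_{1max}$ and the initial-condition bound (\ref{V10}) guarantees $V_1(0)<V_{1max}$. Invoking Theorem \ref{thms} then yields three conclusions at once: $\mathcal{G}(t)$ stays connected for all $t\ge 0$ and becomes fixed, $\mathcal{G}(t)=\mathcal{G}(t_k)$, after some finite $t_k$; collision avoidance G2), i.e.\ $\|q_i-q_j\|>r_0$; and speed consensus $\lim_{t\to\infty}(v_i(t)-v_j(t))=0$. Moreover, the key inequality (\ref{Up}) established in that proof confines every linked pair to $r_0<\|q_i-q_j\|<R_0$, which is exactly cohesion maintenance G3) for the edges of the connected flock. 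Thus G2) and G3) are already in hand.

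Next I would feed the settled graph into Theorem \ref{thmo}. Because $\mathcal{G}(t)=\mathcal{G}(t_k)$ for $t\ge t_k$, the neighbour sets $\mathcal{N}_i(t)$ are constant on $[t_k,\infty)$, which is precisely what is needed for the candidate $V_2$ together with identity (\ref{lemmaei_equ}) to give $\dot V_2=-\sum_{i=1}^N\dot e_i\sigma_2(\dot e_i)\le 0$. With $\theta_r,\dot\theta_r,\ddot\theta_r$ bounded, Barbalat's lemma then delivers $\dot e_i\to 0$ and orientation consensus $\theta_i-\theta_j\to 0$, hence $e(\theta_i)-e(\theta_j)\to 0$ by continuity of $e(\cdot)$.

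Finally, to close goal G1) I would combine the two limits. Writing $\dot q_i-\dot q_j=v_ie(\theta_i)-v_je(\theta_j)=v_i\bigl(e(\theta_i)-e(\theta_j)\bigr)+(v_i-v_j)e(\theta_j)$, the second term vanishes since $v_i-v_j\to 0$ and $\|e(\theta_j)\|=1$, while the first term vanishes since $e(\theta_i)-e(\theta_j)\to 0$ and $|v_i|$ is uniformly bounded (from $\tfrac12 v_i^2\le V_1\le V_{1max}$). Hence $\dot q_i-\dot q_j\to 0$, which is velocity consensus G1). I expect the only genuinely new estimate, and hence the main technical point, to be this last step: one must pair the uniform bound on $|v_i|$ with the continuity of $e(\cdot)$ to convert separate speed and heading consensus into full velocity consensus. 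Everything else is bookkeeping, a direct invocation of the two established theorems, with the sequencing $t_k$ from Theorem \ref{thms} supplying the fixed graph required by Theorem \ref{thmo}.
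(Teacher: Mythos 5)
Your proposal is correct and takes essentially the same route as the paper: the paper's own proof of this theorem is literally the one-line remark that it follows from Theorems \ref{thms} and \ref{thmo}, which is exactly your decomposition. Your write-up merely fills in the details the paper leaves implicit --- notably the splitting $\dot q_i - \dot q_j = v_i\bigl(e(\theta_i)-e(\theta_j)\bigr) + (v_i - v_j)e(\theta_j)$, the uniform bound $|v_i| \le \sqrt{2V_{1max}}$ from $\tfrac12 v_i^2 \le V_1 \le V_{1max}$, and the Lipschitz continuity of $e(\cdot)$ --- which is the step needed to pass from separate speed and heading consensus to the stated velocity consensus G1.
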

\begin{proof}
 The proof is straightforward from the results of Theorems 3.1 and 3.2.
\end{proof}

\section{Avoidance of Obstacles}
\label{avoidance_obstacles}
The problem of obstacle avoidance has been extensively studied in the literature \cite{LiangLee2005,LiangLee2006,Nguyen2016}. In this section, we employ the idea from \cite{LiangLee2005} to derive our control algorithm in which the agents are able to pass obstacles. It is shown that a convex obstacle can be extrapolated by a round shape or a rectangle \cite{LiangLee2005,Nguyen2016}. In \cite{LiangLee2005}, a convex obstacle is presented by a circle, which is used in our work. Let $q_r=[x_r,y_r]^T$ be the coordinate of the robot, $q_{jobs}=[x_{obs},y_{obs}]^T$ be the projection point of the robot onto obstacle $j=1,2,\dots, n_{obs}$ where $n_{obs}$ is the number of obstacles, and $\bar{O}_k=[x_k,y_k]^T$ be the centre of the obstacle. From \cite{LiangLee2005},
\be
	q_{jobs}=\frac{r}{\|q_r-\bar{O}_k\|}q_r+(1-\frac{r}{\|q_r-\bar{O}_k\|})\bar{O}_r
\ee
where $r$ is the radius of the obstacle. The projection point has the following velocity
\be
	v_{jobs}=\frac{vr\sin{\alpha}}{\|q_r-q_{jobs}\|}
\ee
where $v$ is the velocity of the agent, $\alpha$ is the angle between the heading of the robot and the straight line which connects the robot to the centre of the obstacle. The projection point moves in the direction \cite{LiangLee2005}
\bea
	\theta_{jobs}&=&-\frac{\pi}{2}+\alpha+\theta, \, \alpha>0\\
	\theta_{jobs}&=&\frac{\pi}{2}+\alpha+\theta, \, \text{otherwise}.
\eea
The fact that the projection point possesses a position, velocity, and orientation enables it to be an agent. The above descriptions of the robot and obstacle are demonstrated in Figure \ref{obs_robot}.
\begin{figure}[htp]
\centering
\includegraphics[width=10cm,height=10cm,keepaspectratio]{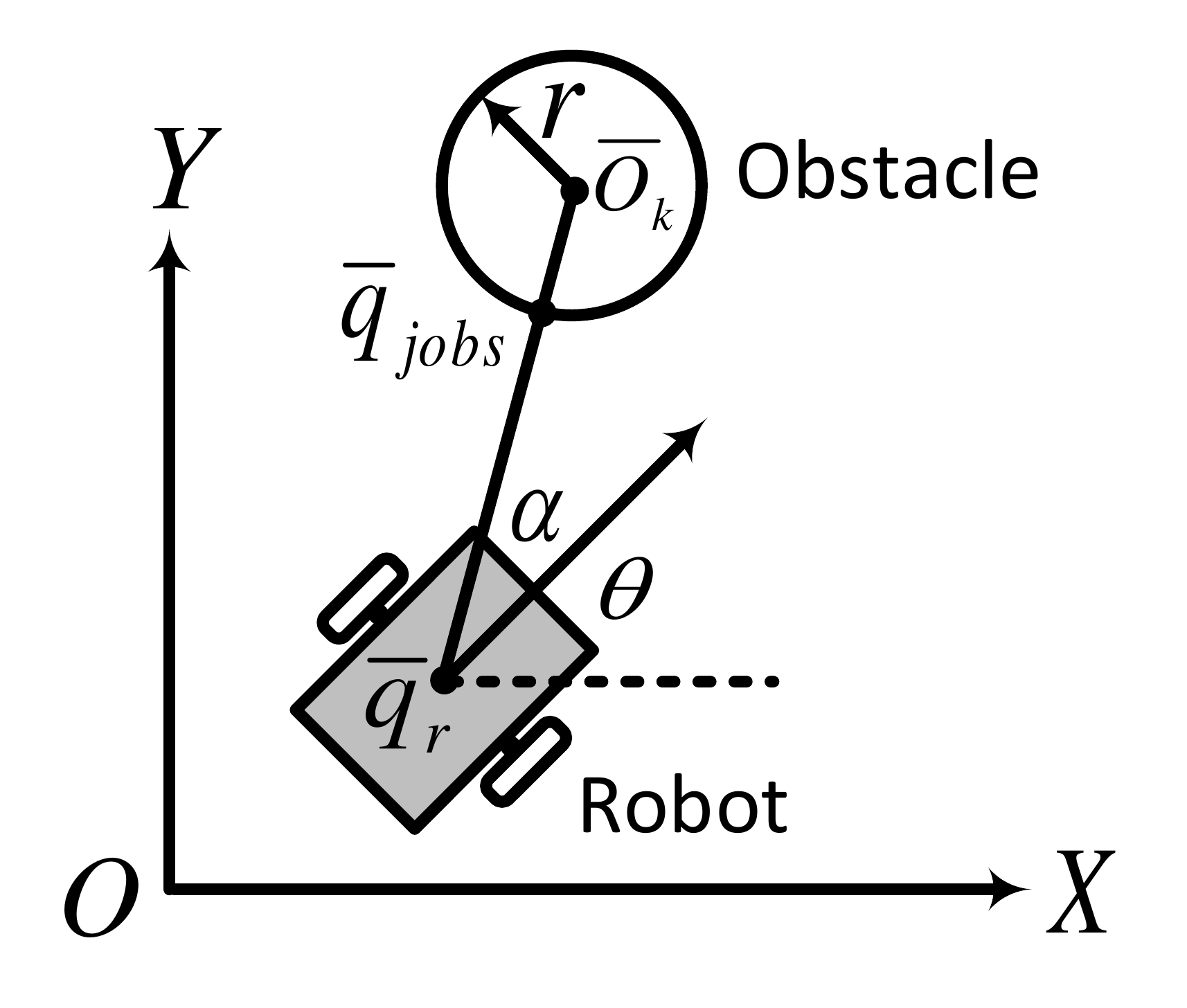}
\caption{Illustration of a robot and a convex obstacle.} \label{obs_robot}
\end{figure}

We have the following orientation consensus theorem.
\begin{theorem}\label{thmobs}
The following control protocol guarantees that the robot avoids obstacles with arbitrary boundary shapes:
\be
\label{obs.speed}
  u_i  = -\sum_{j\in \mathcal{N}_{obs}} \nabla_{q_i} U(q_i,q_j)\cdot e(\theta_i) - \sum_{j\in \mathcal{N}_{obs}} \sigma_1(v_i - v_j) 
\ee
\be\label{obs_heading}
	\tau_i=\ddot{\theta}_r-\sigma_2(\dot{\theta}_i-\dot{\theta}_r)-\frac{k_{\theta}}{n_{obs}+1}[(n_{obs}+1)\theta_i-\sum_{j\in \mathcal{N}_i(t)} \theta_j-\theta_r],
\ee
where  $\mathcal{N}_{obs}$ is the set of obstacles, $n_{obs}$ is the number of obstacles and
\be
\theta_r=\frac{1}{n_{obs}}\sum_{j\in \mathcal{N}_{obs}}\theta_{jobs}.
\ee
  \vspace{3pt}
\end{theorem}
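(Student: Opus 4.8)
The plan is to reuse the Lyapunov machinery of Theorems \ref{thms} and \ref{thmo}, treating each obstacle's projection point as a virtual neighbour agent. The construction preceding the theorem endows the projection point $q_{jobs}$ with a well-defined position, velocity $v_{jobs}$, and orientation $\theta_{jobs}$; the control laws (\ref{obs.speed}) and (\ref{obs_heading}) are precisely the flocking protocols (\ref{proto.1sdesign.speed}) and (\ref{tauidesign}) with the neighbour set $\mathcal{N}_i(t)$ replaced by the obstacle set $\mathcal{N}_{obs}$, and with the desired heading $\theta_r$ set to the average $\frac{1}{n_{obs}}\sum_{j\in \mathcal{N}_{obs}}\theta_{jobs}$. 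Accordingly, I would first confirm that the coordination function $U(q_i,q_{jobs})$ retains its repulsive behaviour with respect to the robot--obstacle separation $\|q_i-q_{jobs}\|$, so that properties P1)--P4) remain in force with the threshold $r_0$ now playing the role of the safety radius about the obstacle boundary.

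Next I would introduce the energy function
\be
  V_{obs} = \frac{1}{2}\sum_{j\in \mathcal{N}_{obs}} U(q_i,q_{jobs}) + \frac{1}{2}v_i^2,
\ee
mirroring $V_1$, and compute $\dot V_{obs}$ along (\ref{mobiledynamics}) subject to (\ref{obs.speed}). Substituting the speed law cancels the gradient term exactly as in (\ref{re.dotV1}), leaving a residual of the form $-\sum v_i\sigma_1(v_i-v_{jobs})$, which is sign-definite by the odd, nondecreasing property of $\sigma_1$ exploited in Lemma \ref{lemma2}. Hence $\dot V_{obs}\le 0$ and $V_{obs}(t)\le V_{obs}(0)<U_M$ for all $t$, provided the initial energy lies below the barrier $U_M$. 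By continuity of $U$ together with property P3), this bound forbids $\|q_i-q_{jobs}\|$ from ever reaching $r_0$, which is exactly the collision-avoidance conclusion of Theorem \ref{thms} transferred to the robot--obstacle pair.

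For the steering part I would replicate Theorem \ref{thmo}: with $e_i=\theta_i-\theta_r$ and the obstacle-induced $\theta_r$, the dynamics give $\ddot{e}_i = \tau_i-\ddot{\theta}_r = -\sigma_2(\dot{e}_i) - \frac{k_\theta}{n_{obs}+1}[(n_{obs}+1)\theta_i-\sum_{j\in\mathcal{N}_i(t)}\theta_j-\theta_r]$, so the candidate $V_2$ and its derivative (\ref{dotV2}) carry over after invoking Lemma \ref{lemmaei}, yielding $\dot{V}_2=-\sum_{i=1}^N\dot{e}_i\sigma_2(\dot{e}_i)\le 0$ and, by Barbalat's lemma, $\dot{e}_i\to 0$ and $\theta_i\to\theta_r$. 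Since $\theta_{jobs}$ is constructed to point tangentially around the obstacle, alignment of $\theta_i$ with the averaged $\theta_r$ drives the robot along a collision-free detour, establishing the avoidance claim for the circular model. The extension to an arbitrary boundary shape would then follow from the remark that any convex obstacle can be over-approximated by a circumscribing circle or rectangle, so that avoiding the extrapolated boundary guarantees avoidance of the true one.

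The hard part, I expect, will be justifying that the projection point genuinely behaves as an admissible agent in the symmetry argument. Unlike a true neighbour, $q_{jobs}$, $v_{jobs}$, and $\theta_{jobs}$ are functions of the robot's own state rather than independent dynamical variables, so the cancellation underlying Lemma \ref{lemmasum} and Lemma \ref{lemmaei} --- which relies on summing an odd function over an undirected edge set --- must be re-examined to ensure the obstacle term contributes no uncancelled derivative of $q_{jobs}$ in $\dot V_{obs}$. Handling this state dependence rigorously, or arguing that it is dominated by the repulsive barrier on the relevant time scale, is the crux; once it is dispatched, the remaining estimates are routine repetitions of the two preceding theorems.
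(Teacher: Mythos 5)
Your attempt is more ambitious than what the paper actually does: the paper's entire proof of Theorem \ref{thmobs} is a single sentence deferring to the approach of the cited work of Liang and Lee, with no Lyapunov computation at all. So there is no detailed argument in the paper to match yours against; the question is whether your self-contained argument closes. It does not, and the reason is precisely the issue you flag in your last paragraph and then leave open. When you differentiate $V_{obs}$, the term $U(q_i,q_{jobs})$ contributes
\begin{equation}
\nabla_{q_i}U(q_i,q_{jobs})\cdot \dot q_i \;+\; \nabla_{q_{jobs}}U(q_i,q_{jobs})\cdot \dot q_{jobs},
\end{equation}
and the control (\ref{obs.speed}) cancels only the first piece. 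In Theorem \ref{thms} the analogous cross terms cancel pairwise because both endpoints of every edge are controlled agents applying the same protocol over an undirected graph --- that is exactly what Lemma \ref{lemmasum} encodes. Here the obstacle applies no control: the projection point $q_{jobs}$ moves as a prescribed function of the robot's own state ($v_{jobs}=vr\sin\alpha/\|q_r-q_{jobs}\|$), so the interaction is one-directional, the symmetry underlying Lemma \ref{lemmasum} is unavailable, and $\dot V_{obs}$ retains an uncancelled, sign-indefinite term. Without bounding or cancelling it, the chain $\dot V_{obs}\le 0 \Rightarrow V_{obs}(t)<U_M \Rightarrow \|q_i-q_{jobs}\|>r_0$ collapses, and with it the avoidance conclusion. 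Saying the issue ``must be re-examined'' or ``dominated on the relevant time scale'' is not a proof step; it is the entire difficulty.

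Two further gaps are inherited from the same state dependence. First, your steering argument reuses Theorem \ref{thmo}, but that theorem \emph{assumes} $\theta_r$, $\dot\theta_r$, $\ddot\theta_r$ bounded; here $\theta_r=\frac{1}{n_{obs}}\sum_{j}\theta_{jobs}$ depends on $\alpha$ and hence on the robot's own position and heading, so $\ddot\theta_r$ involves the robot's acceleration and is not a priori bounded --- this hypothesis must be verified, not invoked. Second, the theorem claims avoidance for \emph{arbitrary} boundary shapes, while your closing argument (circumscribing circle) covers only convex obstacles, and even there over-approximation by a circle is not justified to preserve the Lyapunov estimates. In fairness, these defects are arguably present in the paper itself, which proves nothing beyond a citation; but as a standalone proof your proposal identifies the crux correctly and then does not dispatch it.
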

\begin{proof}
The proof can be derived using the same approach as in \cite{LiangLee2005}.
\end{proof}
\begin{remark}
It should be noted that the speed control law in (\ref{obs.speed}) enjoys the boundedness due to the saturation function $\sigma_1(.)$, which is different from the one in \cite{LiangLee2005}. Since $\theta_i,\, \theta_r \in(-\pi,\pi]$ and the saturation function $\sigma_2(.)$ is bounded, (\ref{obs_heading}) reveals that the heading control law is also bounded.
\end{remark}
\section{SIMULATION}
\label{simulation}

We conducted simulation for a multi-agent system of 15 mobile robots of the model (\ref{mobiledynamics}). A bump function is used to generate the smooth coordination function $U$. As the control (\ref{proto.1sdesign.speed}) invokes the gradient forces $\nabla_{q_i}U$, we designed the coordination function in the form
\begin{align}
  U(r) & = \int_0^r\varphi(s)ds
\end{align}
where $\varphi$ is a compact support function given by
\begin{align}
  \varphi(s) & = \begin{cases}
    p_1\exp\Big(\frac{-(s-s_0)^2}{((a-r_0)/2)^2- (s - s_0)^2}\Big) & \text{~if~} s \in (r_0,a) \nonumber\\
    p_2\exp\Big(\frac{-(s-s_1)^2}{((R_0-A)/2)^2- (s - s_1)^2}\Big) & \text{~if~} s \in (A,R_0) \nonumber\\
    0 & \text{~otherwise}
  \end{cases}
\end{align}
where $ s_0=\frac{r_0+a}{2}$, $ s_1=\frac{R_0+A}{2}$, and $p_1, p_2, a, A, r_0$ and $R_0$ are design parameters.

The parameters of the coordinate function are $r_0 = 1$, $a = 3$, $A = 6$, $R_0 = 8$, and $U_M=15$. The parameter for the control law (\ref{tauidesign}) is $k_{\theta}=1.5$. The initial positions of 15 mobile robots are randomly distributed on three circles. Their coordinates are
\bea
\nn x(i)&=& \Gamma\,\sin(z\,\pi\,(i-1)/\Gamma+\pi)\\
\nn y(i) &=& \Gamma\,\cos(z\,\pi\,(i-1)/\Gamma+\pi) 
\eea
where $z=1$, and
\be
\nn	\Gamma=\begin{cases}4 & \mbox{if } i<6\\ 8& \mbox{otherwise}\end{cases}.
\ee
The initial values of $\theta_i$ and $v_i$ are randomly chosen. The initial value of $w_i$ is 0. The desired orientation of the flock is $\theta_r=\pi/2$.

We obtained the simulation results shown in Figures \ref{flocking}--\ref{tau}. It is shown in Figures \ref{orientation} and \ref{angularspeed}, the heading angles converge to $\theta_r$ and angular speeds of all agents converge to 0 after $t=40 s$. The linear speeds are depicted in Figure \ref{speed}, where the convergence of all agents takes place after $t=45 s$. In Figure \ref{angularspeed}, the angular speeds converge after 40 seconds. Hence,  Figure \ref{orientation} and Figure \ref{speed} demonstrate that consensuses on orientation and linear speed of the mobile robots have been obtained. The speed control and steering control signals are shown to be bounded in Figure \ref{u} and Figure \ref{tau}. The minimum distance among agents is described in Figure \ref{dmin}, which shows that collision avoidance is guaranteed. The flocking behavior is shown in Figure \ref{flocking}, where no collision occurred.

\begin{figure}[htp]
\centering
\includegraphics[width=10cm,height=10cm,keepaspectratio]{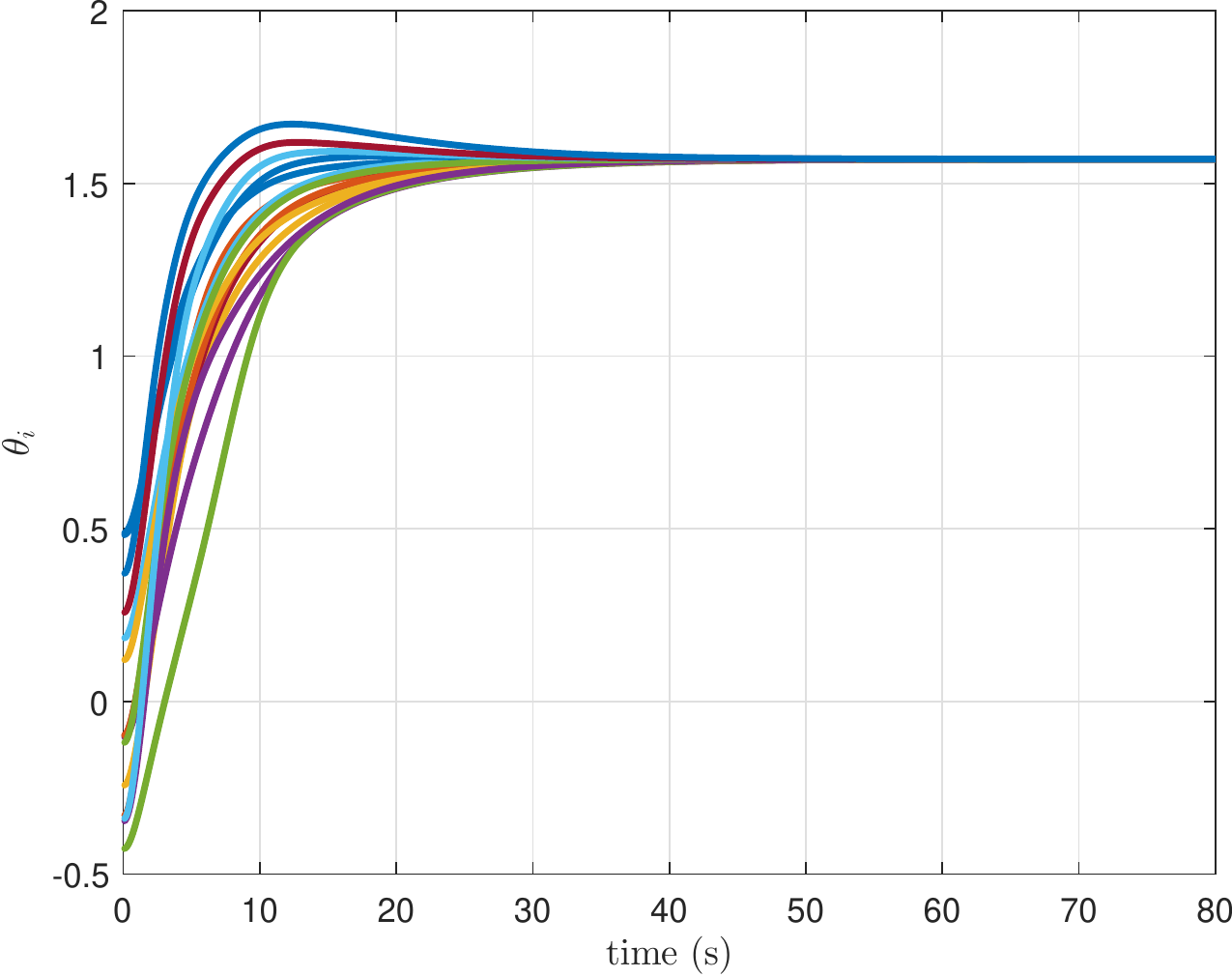}
\caption{Orientation consensus} \label{orientation}

\end{figure}

\begin{figure}[htp]
\centering
\includegraphics[width=10cm,height=10cm,keepaspectratio]{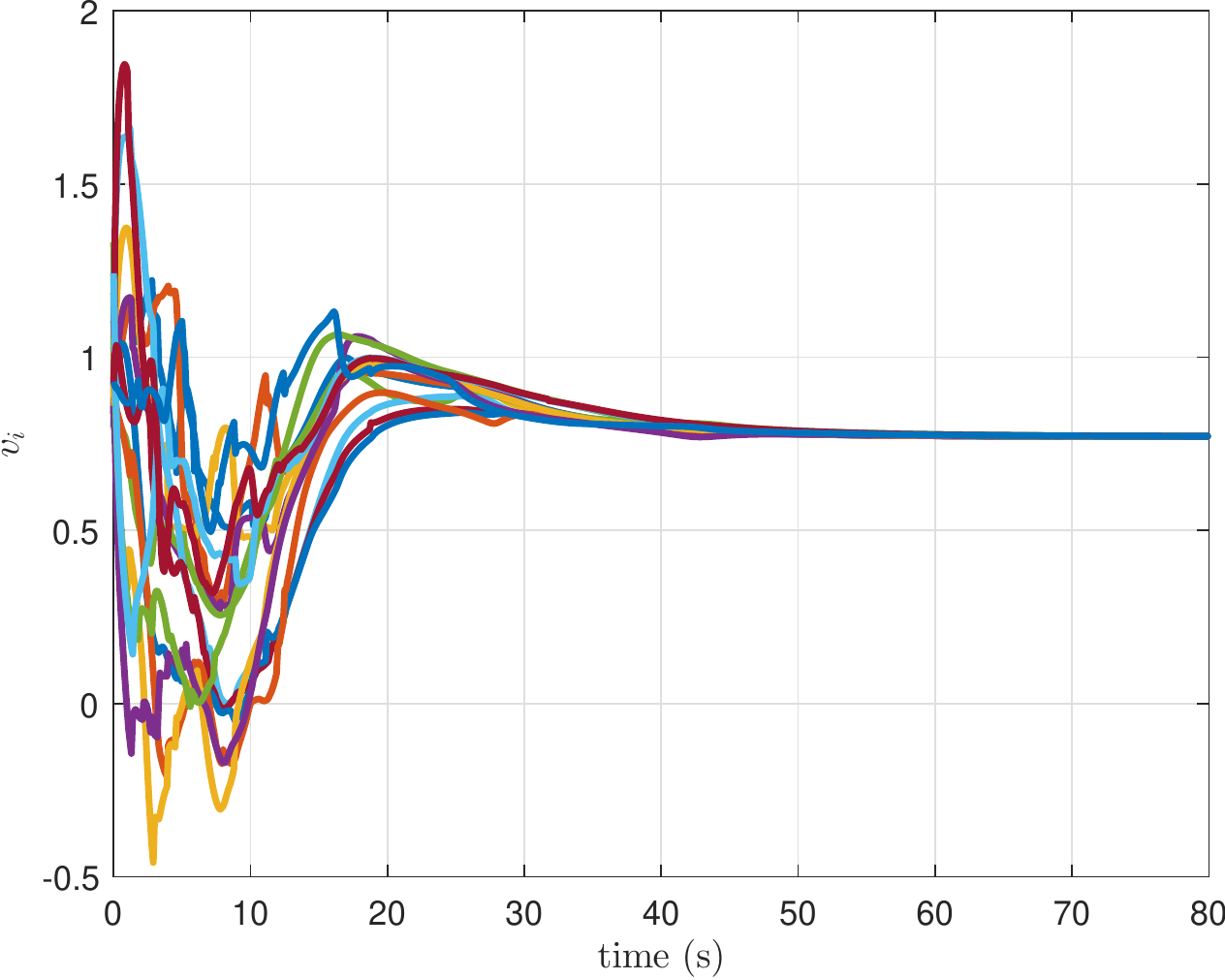}

\caption{Linear speed consensus} \label{speed}
\end{figure}

\begin{figure}[htp]
\centering
\includegraphics[width=10cm,height=10cm,keepaspectratio]{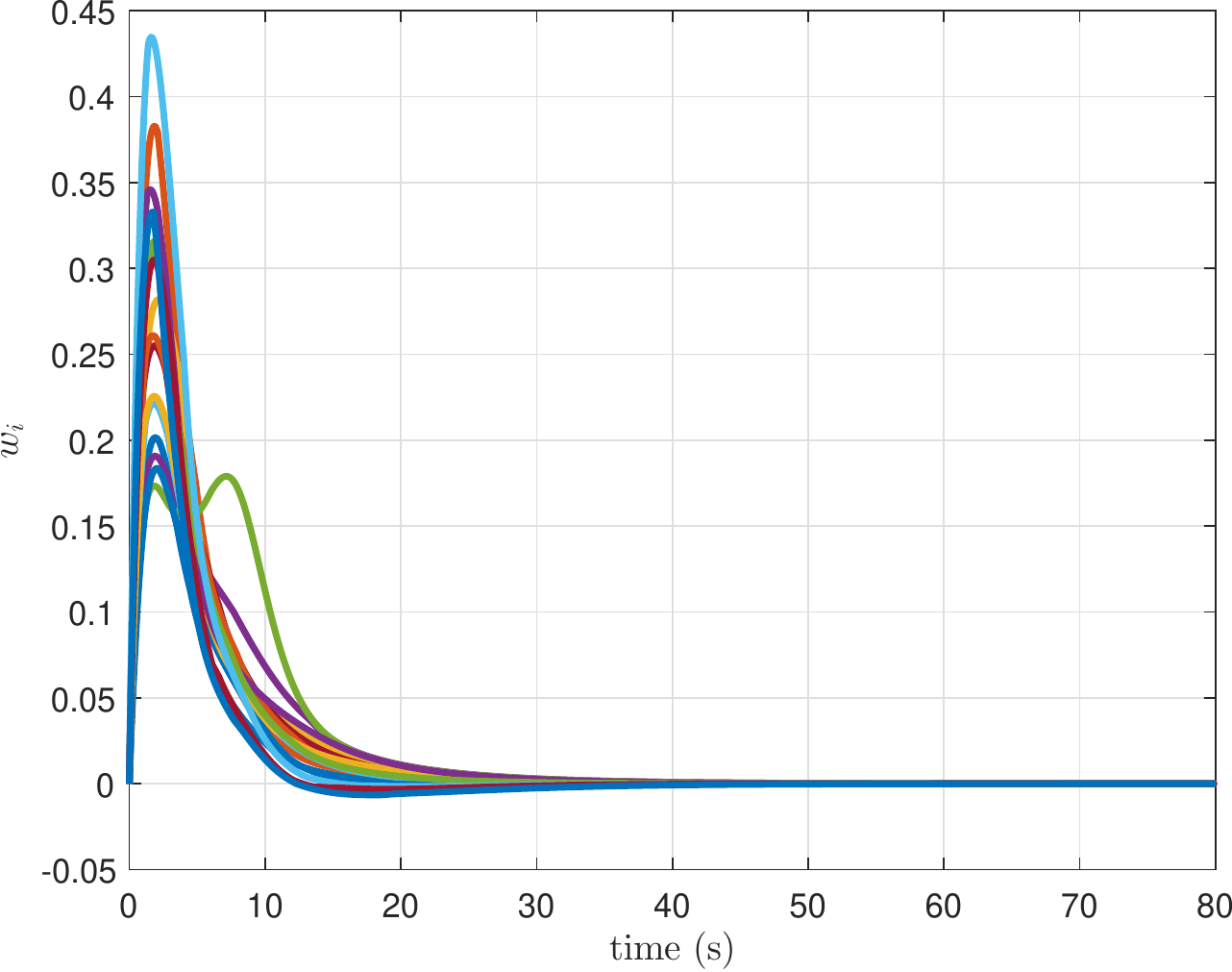}

\caption{Angular speed} \label{angularspeed}
\end{figure}

\begin{figure}[htp]
\centering
\includegraphics[width=10cm,height=10cm,keepaspectratio]{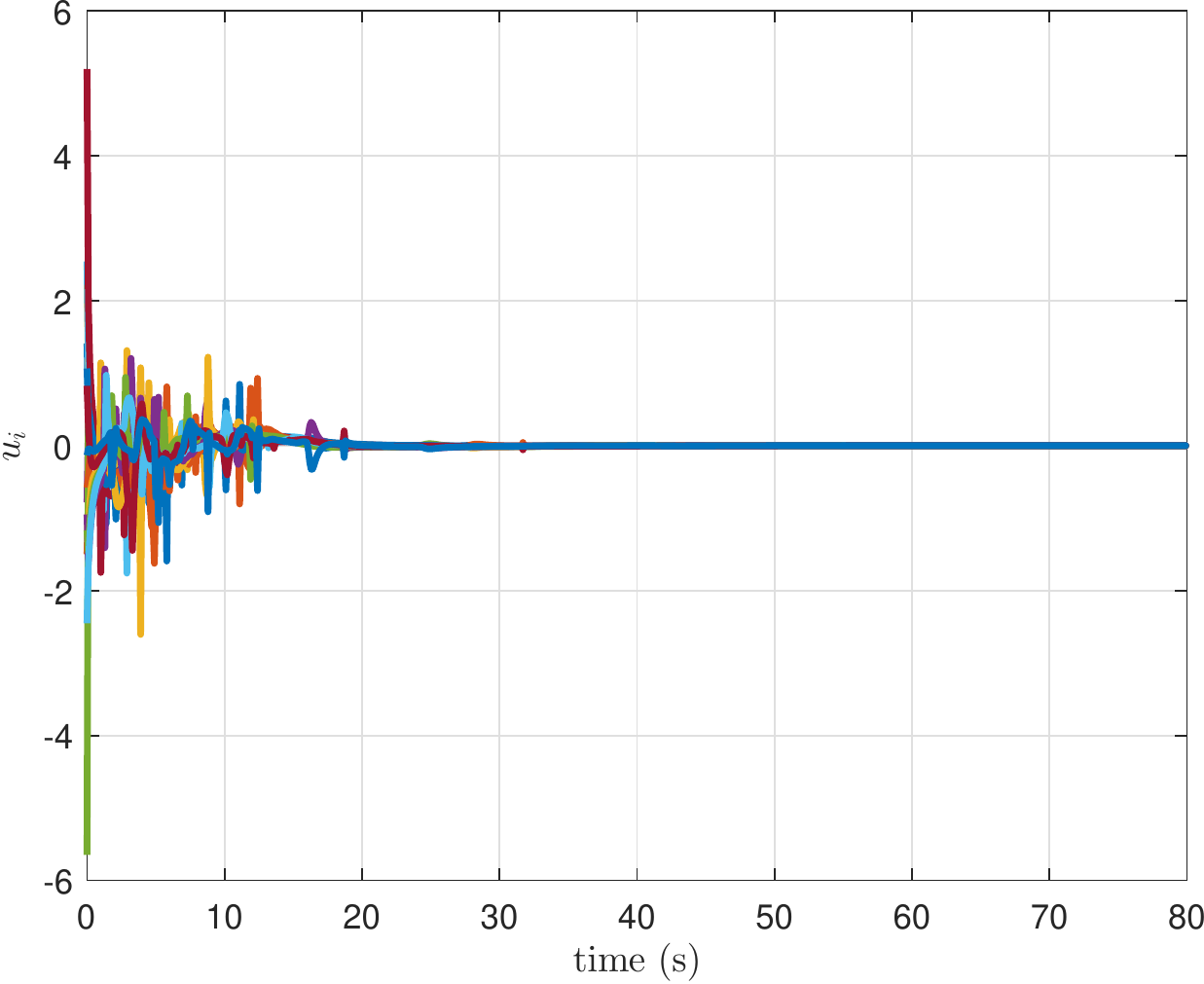}

\caption{Translational force} \label{u}
\end{figure}

\begin{figure}[htp]
\centering
\includegraphics[width=10cm,height=10cm,keepaspectratio]{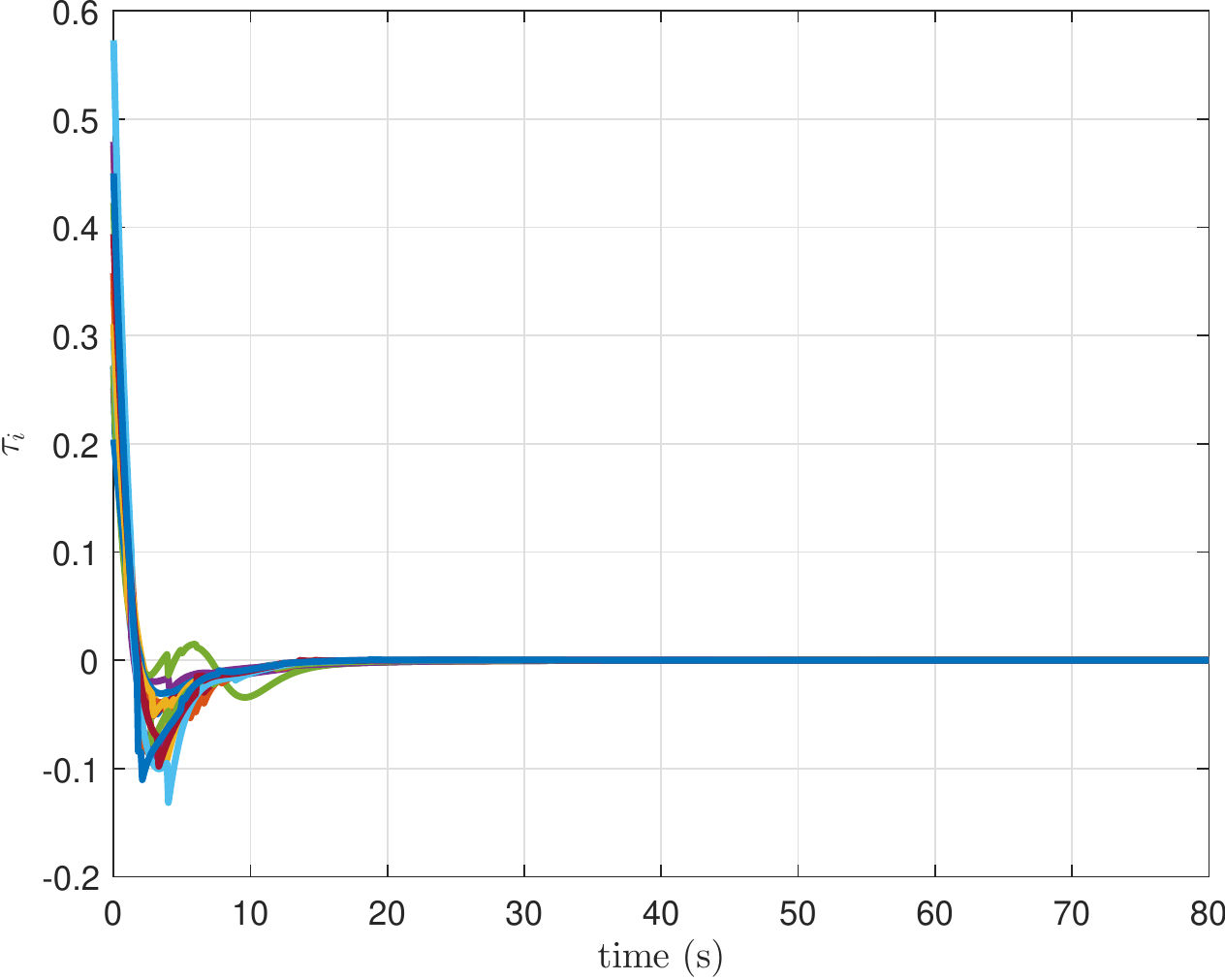}
\caption{Steering control} \label{tau}

\end{figure}

\begin{figure}[htp]
\centering
\includegraphics[width=10cm,height=10cm,keepaspectratio]{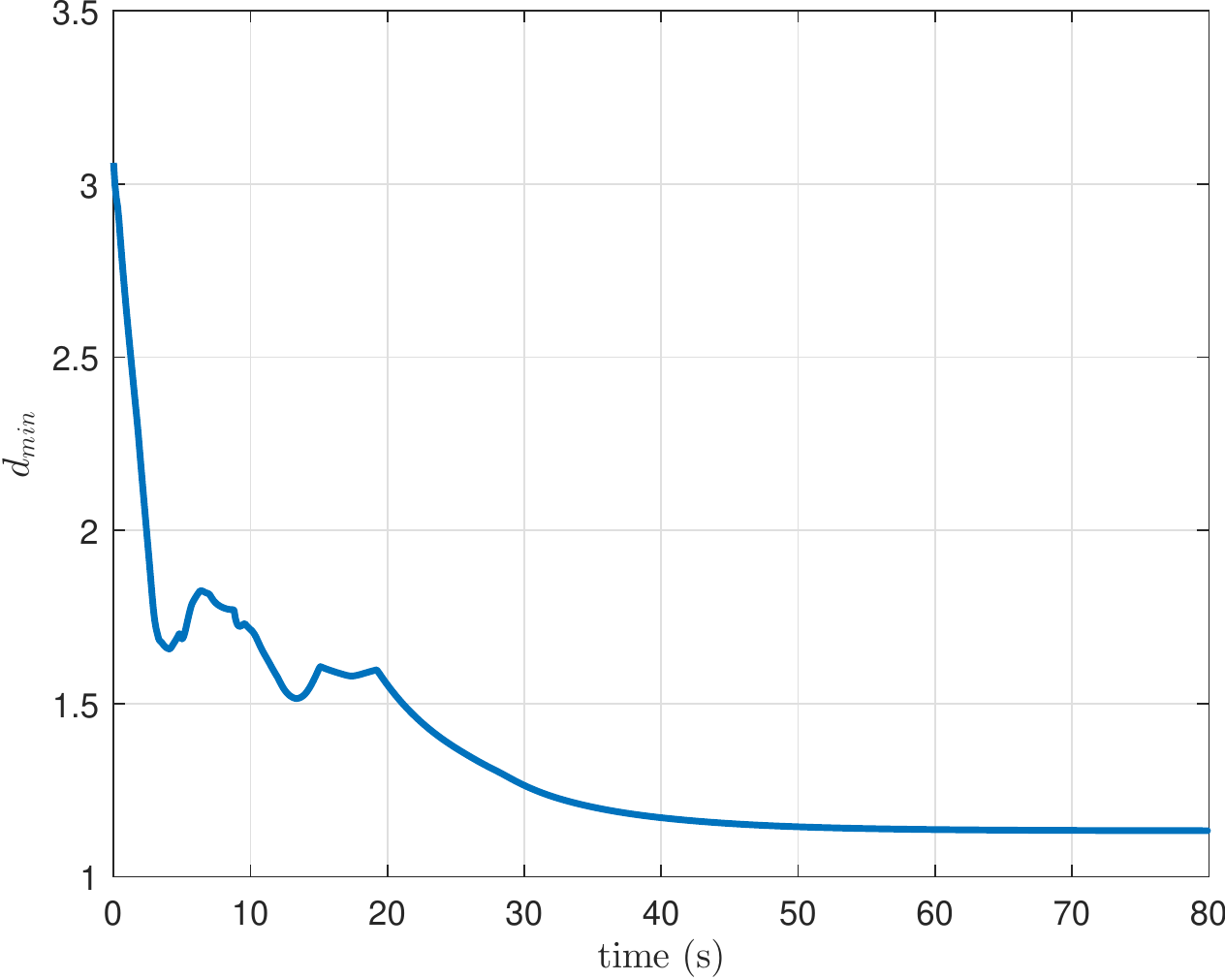}
\caption{The minimum distance among agents} \label{dmin}

\end{figure}

\begin{figure}[htp]
\centering
\includegraphics[width=15cm,height=15cm,keepaspectratio]{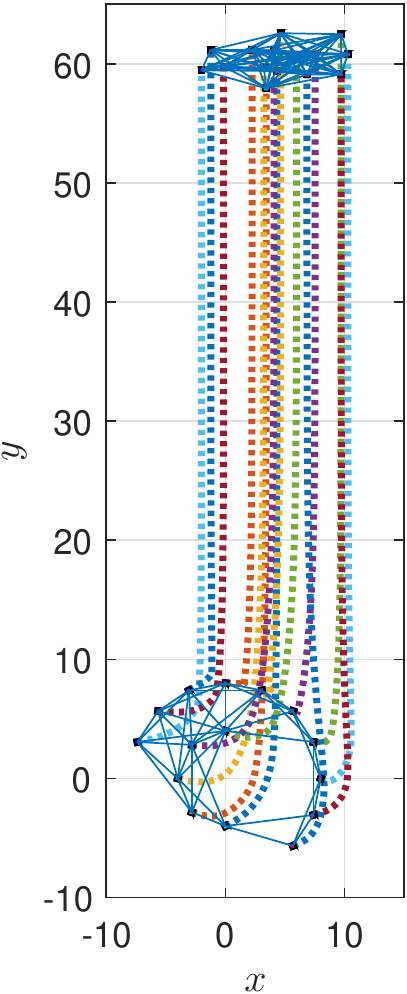}
\caption{Distributed flocking of 15 mobile robots} \label{flocking}
\end{figure}

Next, we consider a simulation for a multiagent system with an obstacle. The obstacle is a circle whose coordinate is $[12,-1]$ with a radius of 1. The configuration of the multiagent system is the same as in the obstacle-free case above. The desired group heading is chosen as $\theta_r=\pi/4$. An agent is chosen as a leader of the group. When the collision avoidance mechanism of the leader is inactive, a speed control law is designed to drive the group at a constant speed, that is 
\be
	u_l=- \sigma_1(v_l - v_r)
\ee
where $u_l$ is the speed control of the leader, $v_l$ is its linear speed, and $v_r=0.2$.
During the first 50 seconds of their evolution, the mobile robots encounter the obstacle. The control laws (\ref{obs.speed}) and (\ref{obs_heading}) enable them to avoid potential collisions with the obstacle and with other neibouring agents. The orientations of the robots in Figure \ref{orientationobs} converge to $\pi/4$ after 60 seconds. Similarly, in Figure \ref{speedobs}, the linear speeds of the agents converge after 50 seconds. In Figure \ref{angularspeedobs}, the angular speeds converge faster to 0 after 50 seconds. Figs. \ref{uobs} and \ref{tauobs} demonstrate that the control inputs both are bounded. Finally, Figure \ref{dminobs} shows that no collision takes place during the evolution of the robots. The evolution of the multiagent system is shown in Figure \ref{flockingobs} in which the robots cooperate to form a flocking and avoid the obstacle.

\begin{figure}[htp]
\centering
\includegraphics[width=10cm,height=10cm,keepaspectratio]{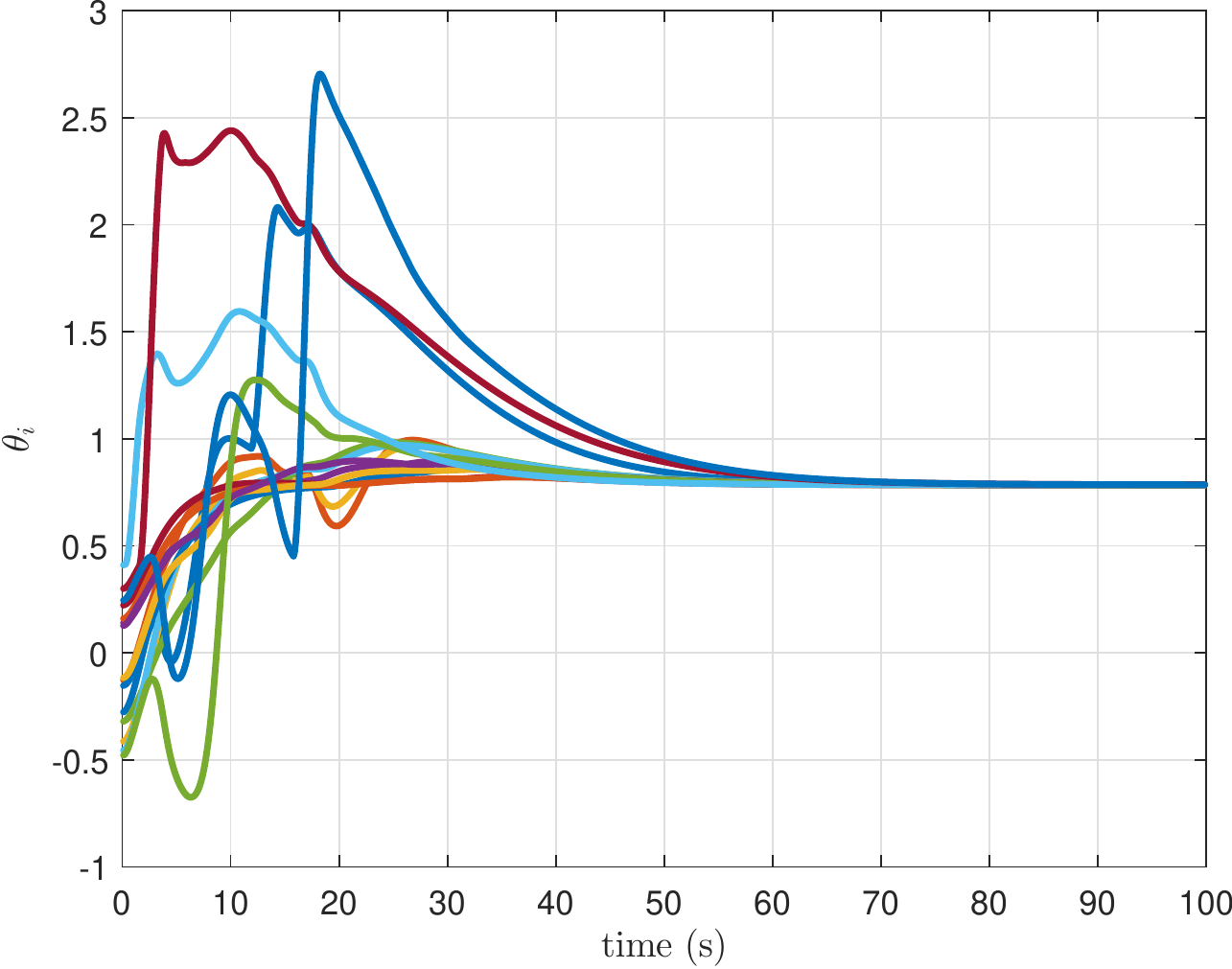}
\caption{Orientation consensus for the obstacle avoidance case} \label{orientationobs}

\end{figure}

\begin{figure}[htp]
\centering
\includegraphics[width=10cm,height=10cm,keepaspectratio]{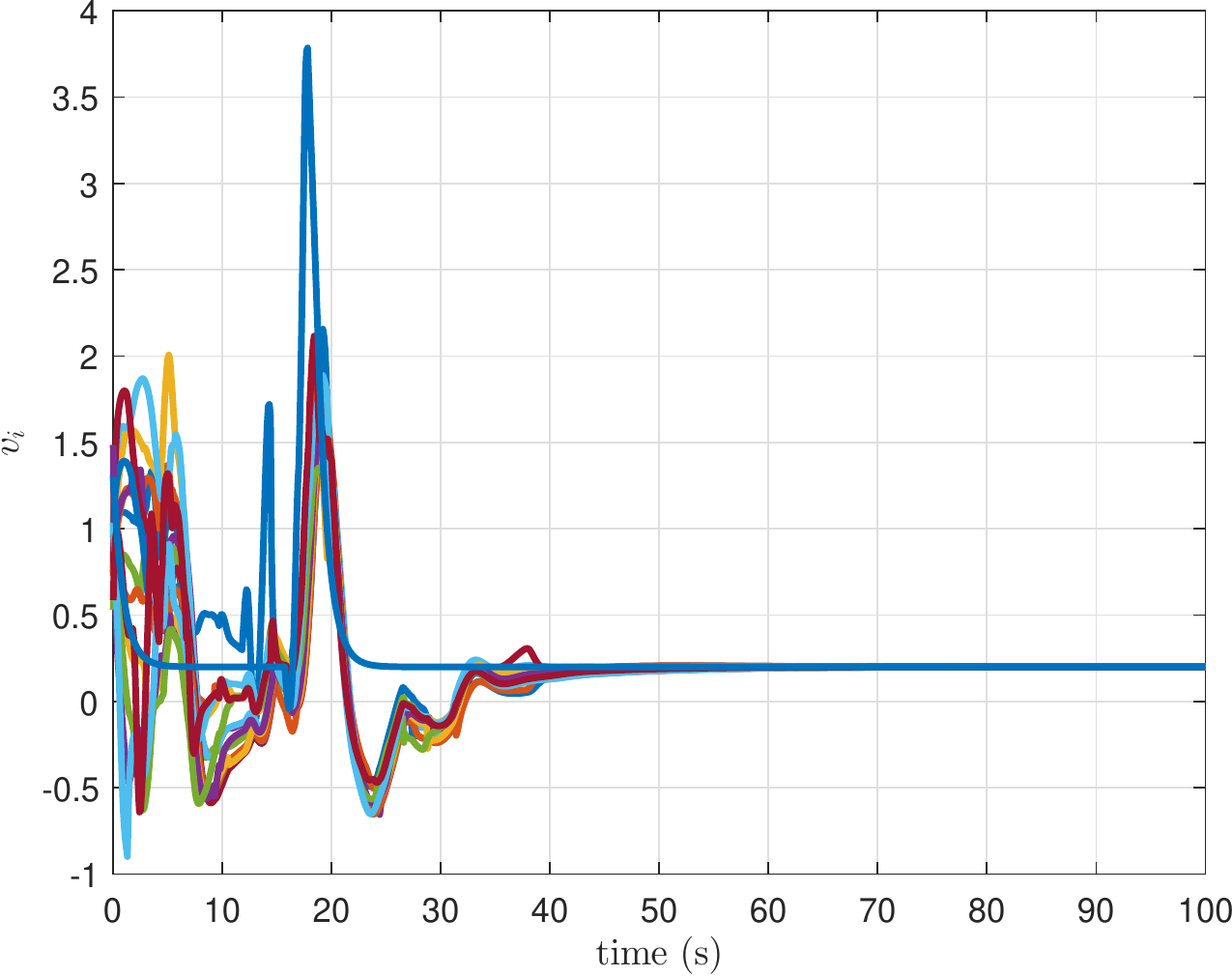}

\caption{Linear speed consensus for the obstacle avoidance case} \label{speedobs}
\end{figure}

\begin{figure}[htp]
\centering
\includegraphics[width=10cm,height=10cm,keepaspectratio]{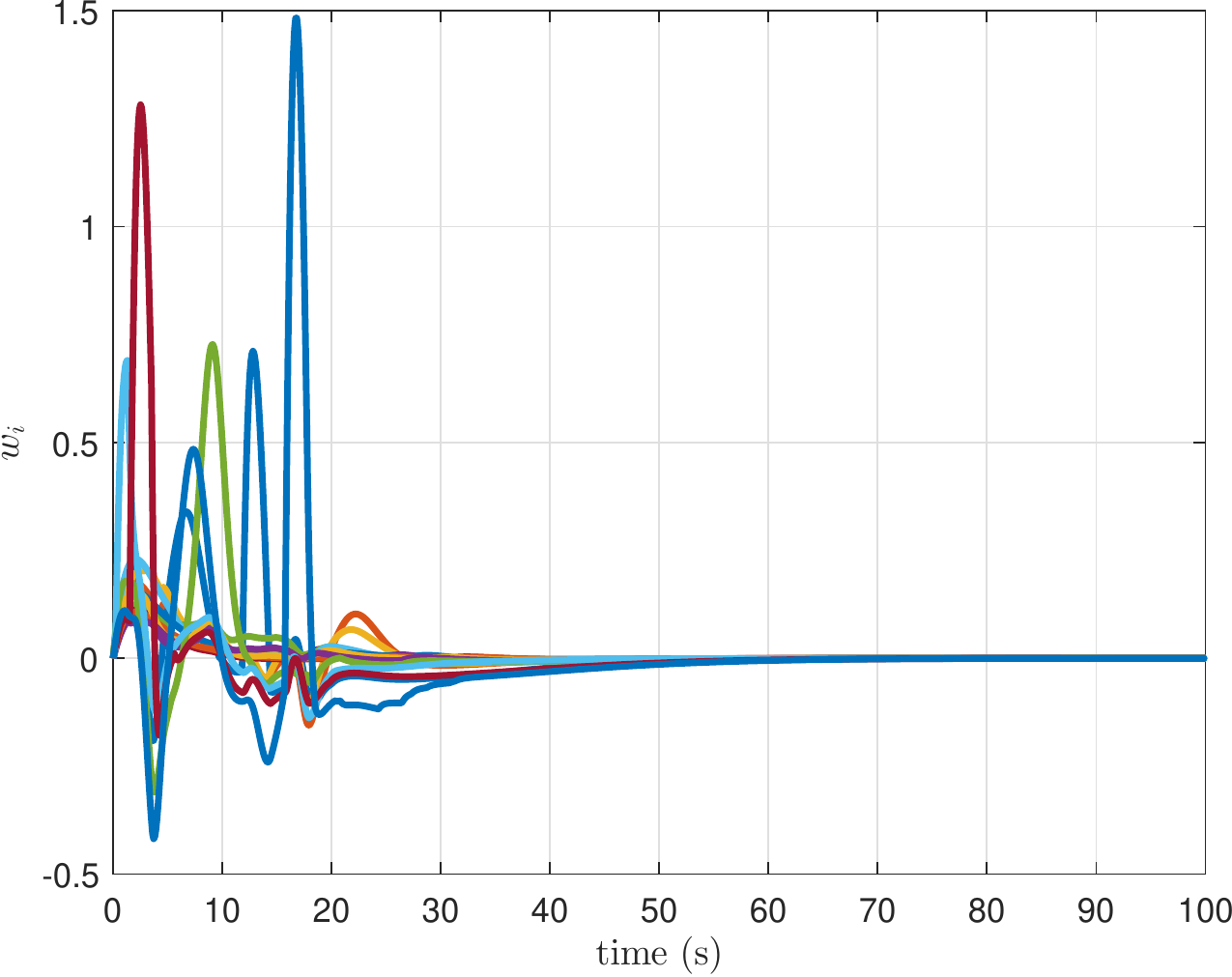}

\caption{Angular speed for the obstacle avoidance case} \label{angularspeedobs}
\end{figure}

\begin{figure}[htp]
\centering
\includegraphics[width=10cm,height=10cm,keepaspectratio]{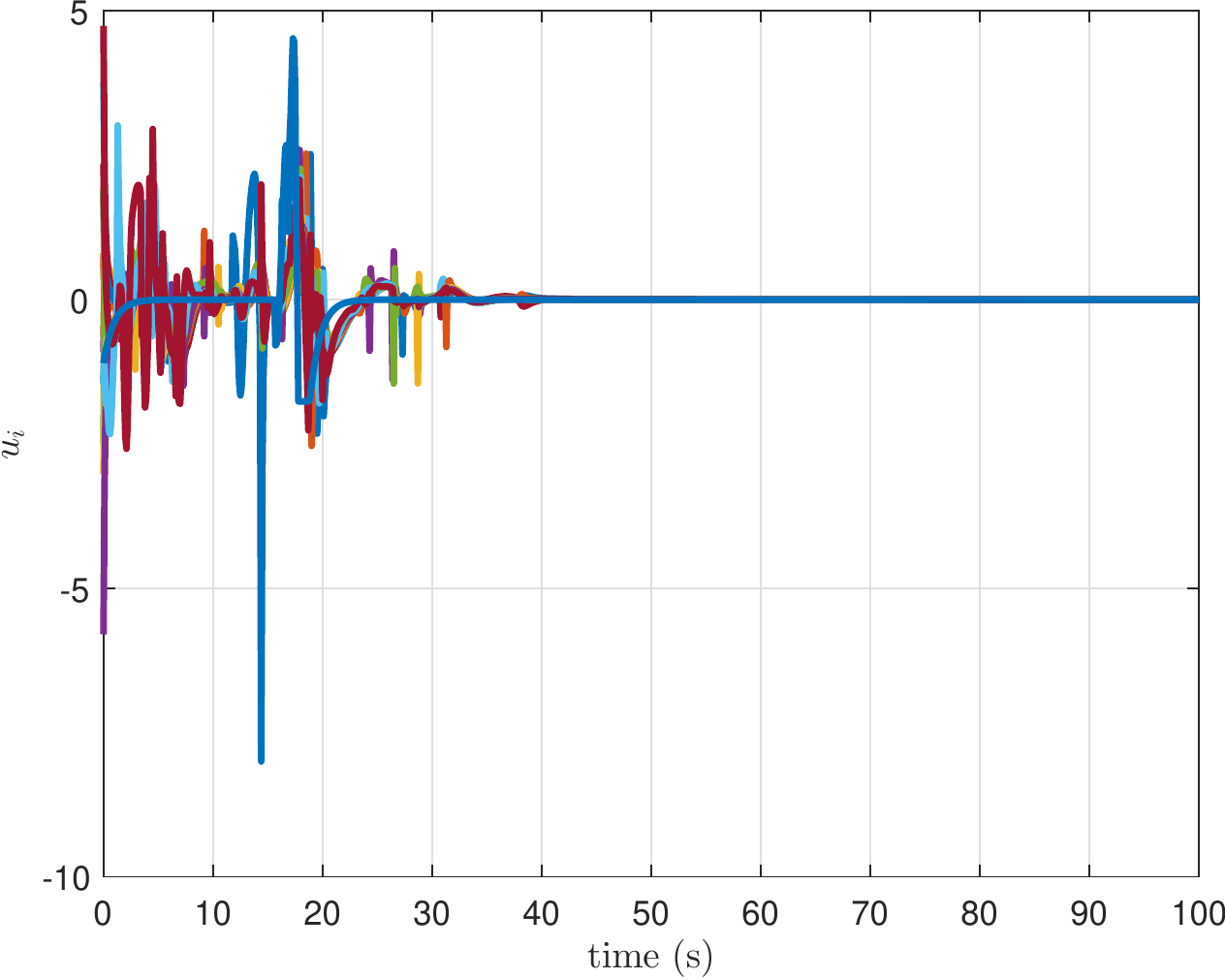}

\caption{Speed control for the obstacle avoidance case} \label{uobs}
\end{figure}

\begin{figure}[htp]
\centering
\includegraphics[width=10cm,height=10cm,keepaspectratio]{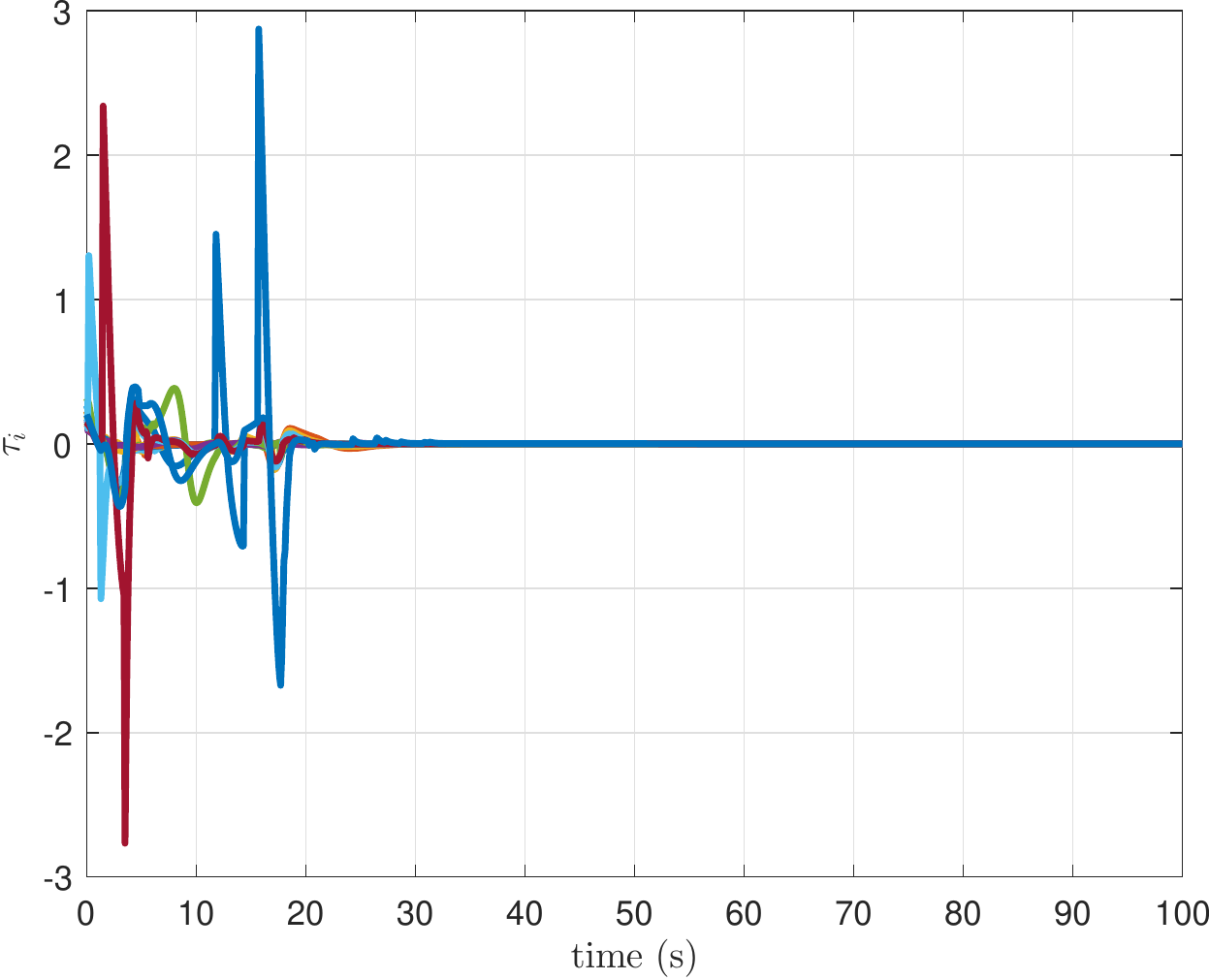}
\caption{Steering control for the obstacle avoidance case} \label{tauobs}

\end{figure}

\begin{figure}[htp]
\centering
\includegraphics[width=10cm,height=10cm,keepaspectratio]{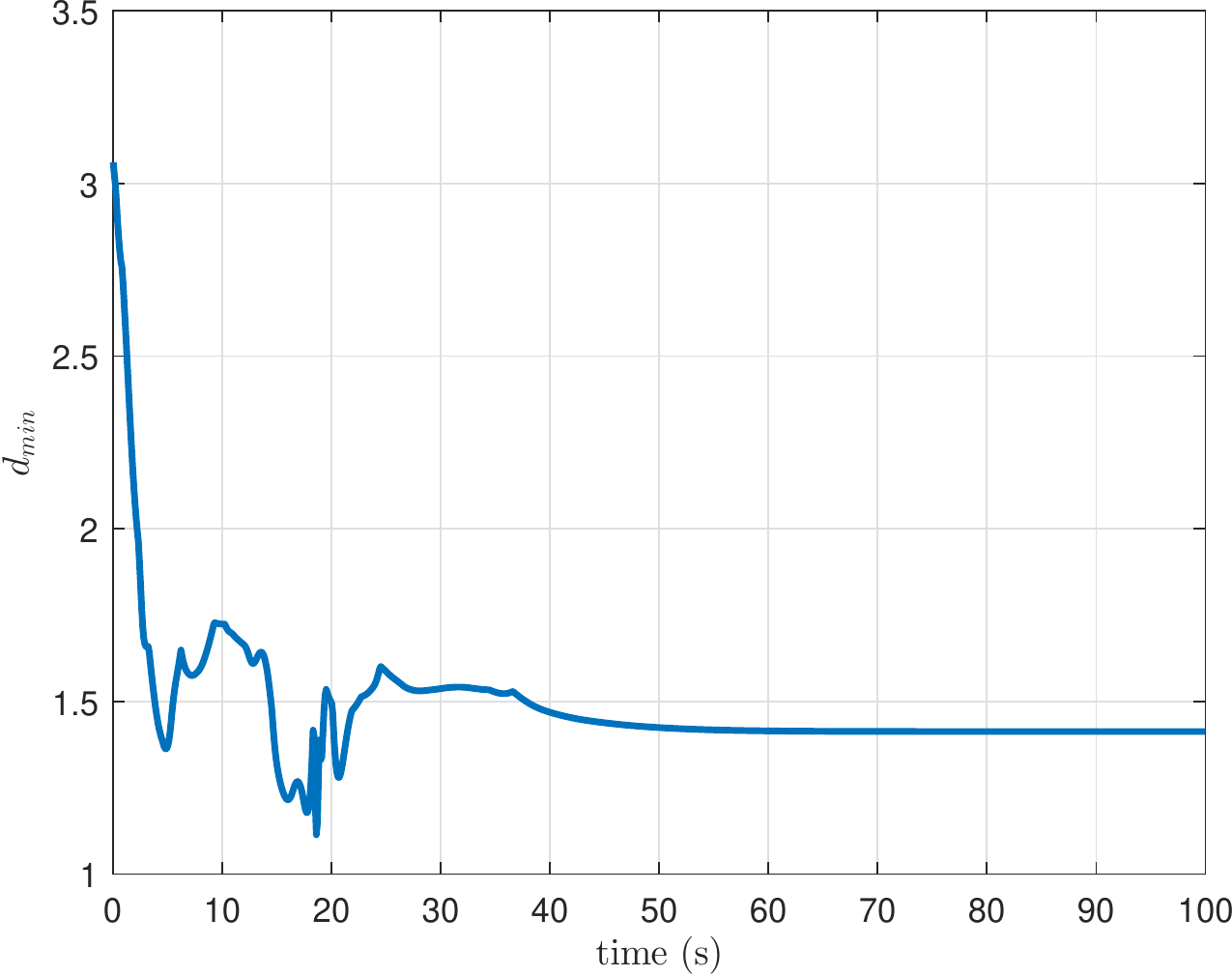}
\caption{The minimum distance among agents for the obstacle avoidance case} \label{dminobs}

\end{figure}

\begin{figure}[t]
\centering
\includegraphics[width=11cm,height=11cm,keepaspectratio]{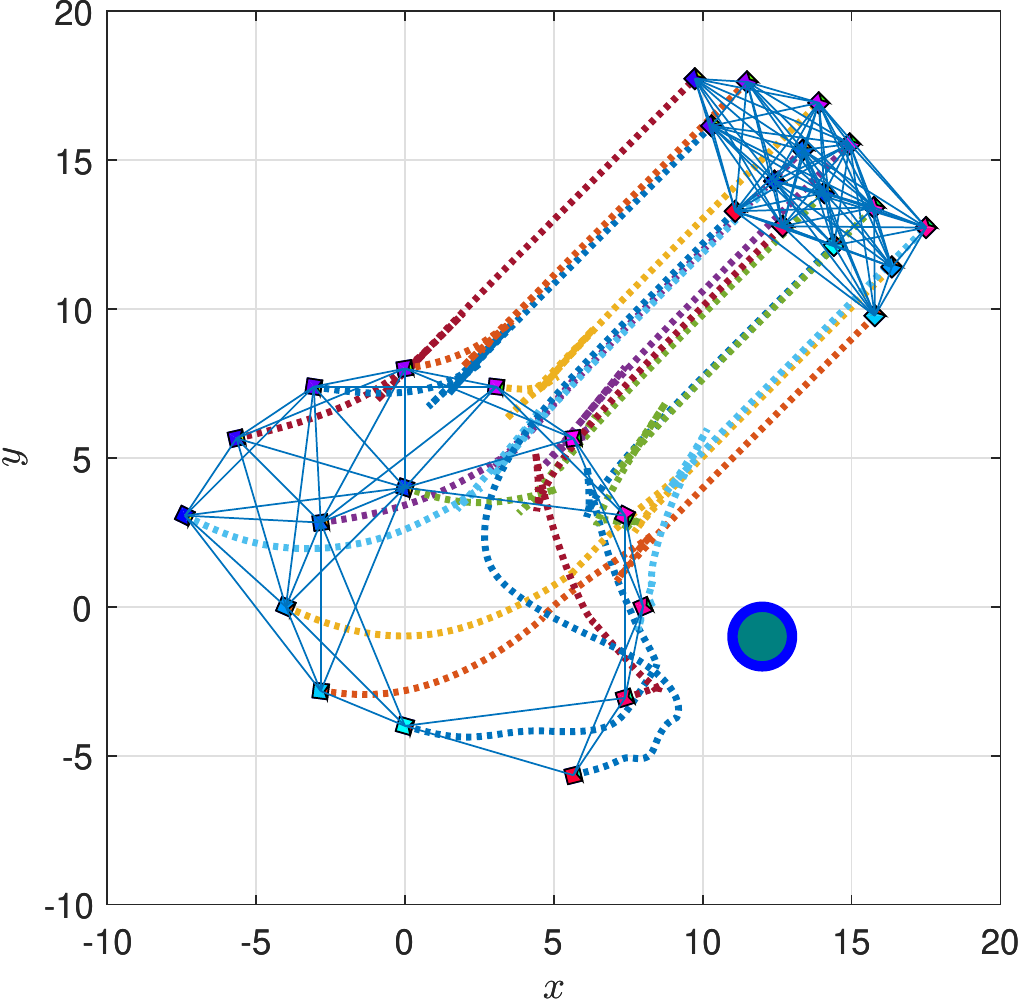}
\caption{Distributed flocking of 15 mobile robots for the obstacle avoidance case} \label{flockingobs}
\end{figure}

\section{CONCLUSIONS}
\label{conclusions}

This chapter has presented a bounded decentralized control protocol for the flocking problem of mobile robots by a systematic fashion, where the control laws only require the information of neighbor agents. The proposed scheme is modular in designing the speed control and steering control separately. Theoretical and numerical results have shown that using our proposed method, a collective system of mobile robots achieves all the multiple objectives of the flocking control: velocity consensus, cohesion maintenance, and collision avoidance. 

Future work would consider the shape and size of each mobile robot and obstacle avoidance for which a similar context was studied in \cite{nguyen2016formation}. Noisy and uncertain environments can affect the performance of the proposed scheme. Robustness analysis and improved methods can be proposed to address this issue.

\section{Acknowledgement}
This work was partially supported by the National Science Foundation  under grant NSF-NRI \#1426828,  the National Aeronautics and Space Administration (NASA)  under Grant No. NNX15AI02H issued through the Nevada NASA Research Infrastructure Development Seed Grant, and the University of Nevada, Reno.


\bibliographystyle{vancouver-modified}
\bibliography{IEEEabrv,bookchapter}

\begin{thebibliography}{10}

\bibitem{TonerTu98}
Toner J, Tu Y.
\newblock Flocks, herds, and schools: A qualitative theory of flocking.
\newblock Phys Rev E. 1998;58:4828--4858.

\bibitem{La_TC_2013}
La HM, Sheng W.
\newblock Distributed Sensor Fusion for Scalar Field Mapping Using Mobile
  Sensor Networks.
\newblock IEEE Trans Cybernetics. 2013 April;43(2):766--778.

\bibitem{Filiberto_RNC3687}
Mu{\~n}oz F, Espinoza~Quesada ES, La HM, et~al.
\newblock Adaptive consensus algorithms for real-time operation of multi-agent
  systems affected by switching network events.
\newblock International Journal of Robust and Nonlinear Control.
  2017;27(9):1566--1588.
\newblock Rnc.3687.
\newblock Available from: \url{http://dx.doi.org/10.1002/rnc.3687}.

\bibitem{La_ICRA10}
La HM, Sheng W.
\newblock Flocking control of multiple agents in noisy environments.
\newblock In: 2010 IEEE International Conference on Robotics and Automation;
  2010. p. 4964--4969.

\bibitem{La_JC}
La HM, Sheng W.
\newblock Multi-Agent Motion Control in Cluttered and Noisy Environments.
\newblock Journal of Communications. 2013;8(1):32--46.

\bibitem{Dang_MFI}
Dang AD, La HM, Horn J.
\newblock Distributed formation control for autonomous robots following desired
  shapes in noisy environment.
\newblock In: 2016 IEEE International Conference on Multisensor Fusion and
  Integration for Intelligent Systems (MFI); 2016. p. 285--290.

\bibitem{Saber06}
{Olfati-Saber} R.
\newblock Flocking for multi-agent dynamic systems: algorithms and theory.
\newblock IEEE Trans Autom Control. 2006 Mar;51(3):401--420.

\bibitem{tanner2007flocking}
Tanner HG, Jadbabaie A, Pappas GJ.
\newblock Flocking in fixed and switching networks.
\newblock IEEE Transactions on Automatic Control. 2007;52(5):863--868.

\bibitem{CuckerSmale07}
Cucker F, Smale S.
\newblock Emergent behavior in flocks.
\newblock IEEE Trans Autom Control. 2007 May;52(5).

\bibitem{DongHuang15}
Dong Y, Huang J.
\newblock Flocking with connectivity preservation of multiple double integrator
  systems subject to external disturbances by a distributed control law.
\newblock Automatica. 2015;55:197 -- 203.

\bibitem{ghapani2016fully}
Ghapani S, Mei J, Ren W, et~al.
\newblock Fully distributed flocking with a moving leader for Lagrange networks
  with parametric uncertainties.
\newblock Automatica. 2016;67:67--76.

\bibitem{moshtagh2007distributed}
Moshtagh N, Jadbabaie A.
\newblock Distributed geodesic control laws for flocking of nonholonomic
  agents.
\newblock IEEE Transactions on Automatic Control. 2007;52(4):681--686.

\bibitem{HanGe15}
Han TT, Ge SS.
\newblock Styled-velocity flocking of autonomous vehicles: a systematic design.
\newblock IEEE Trans Autom Control. 2015 Aug;60(8):2015--2030.

\bibitem{La_SMCA_2015}
La HM, Sheng W, Chen J.
\newblock Cooperative and Active Sensing in Mobile Sensor Networks for Scalar
  Field Mapping.
\newblock IEEE Trans Systems, Man, and Cybernetics: Systems. 2015
  Jan;45(1):1--12.

\bibitem{AmarMohamed13}
Amar K, Mohamed S.
\newblock Stabilized feedback control of unicycle mobile robots.
\newblock Int J Adv Robot Syst. 2013;10(187):1--8.

\bibitem{Han2016}
Han TT, La HM, Dinh BH.
\newblock Flocking of mobile robots by bounded feedback.
\newblock In: 2016 IEEE International Conference on Automation Science and
  Engineering (CASE); 2016. p. 689--694.

\bibitem{Nguyen2016}
Nguyen T, Han TT, La HM.
\newblock Distributed flocking control of mobile robots by bounded feedback.
\newblock In: 2016 54th Annual Allerton Conference on Communication, Control,
  and Computing (Allerton); 2016. p. 563--568.

\bibitem{JadbabaieLinMorse03}
Jadbabaie A, Lin J, Morse AS.
\newblock Coordination of groups of mobile autonomous agents using nearest
  neighbor rules.
\newblock IEEE Trans Autom Control. 2003 Jun;48(6):988--1001.

\bibitem{LiangLee2005}
Liang Y, Lee HH.
\newblock Avoidance of Multiple Obstacles for a Mobile Robot With Nonholonomic
  Constraints.
\newblock In: ASME proceedings of Dynamic Systems and Control; 2005. p.
  1657--1663.

\bibitem{RileyHobsonBence2006}
Riley KF, Hobson MP, Bence SJ.
\newblock Mathematical Methods for Physics and Engineering.
\newblock 3rd ed. Cambridge: Cambridge University Press; 2006.

\bibitem{TannerJadbabaiePappas04}
Tanner HG, Jadbabaie A, Pappas GJ.
\newblock Flocking in teams of nonholonomic agents.
\newblock In: Kumar V, Leonard N, Morse S, editors. Cooperative Control. vol.
  309 of Lecture Notes in Control and Information Sciences. Springer; 2004. p.
  229--239.

\bibitem{LiangLee2006}
Liang Y, Lee HH.
\newblock Decentralized formation control and obstacle avoidance for multiple
  robots with nonholonomic constraints.
\newblock In: 2006 American Control Conference; 2006. p. 5596--5601.

\bibitem{Nguyen2014}
Nguyen T, La HM.
\newblock Formation Control of Multiple Rectangular Agents with Limited
  Communication Ranges.
\newblock In: 10th International Symposium on Advances in Visual Computing.
  Springer International Publishing; 2014. p. 915--924.

\bibitem{nguyen2015formation}
Nguyen T, La HM, Jafari M.
\newblock On the Formation Control of a Multi Vehicle System.
\newblock In: ISSAT International Conference on Modeling of Complex Systems and
  Environments; 2015. .

\bibitem{cruz2007decentralized}
Cruz D, McClintock J, Perteet B, et~al.
\newblock Decentralized cooperative control-a multivehicle platform for
  research in networked embedded systems.
\newblock IEEE control systems. 2007;27(3):58--78.

\bibitem{nguyen2017distributed}
Nguyen T, La HM.
\newblock Distributed Formation Control of Nonholonomic Mobile Robots by
  Bounded Feedback in the Presence of Obstacles.
\newblock arXiv preprint arXiv:170404566. 2017;.

\bibitem{pham2017distributed}
Pham HX, La HM, Feil-Seifer D, et~al.
\newblock A Distributed Control Framework for a Team of Unmanned Aerial
  Vehicles for Dynamic Wildfire Tracking.
\newblock arXiv preprint arXiv:170402630. 2017;.

\bibitem{LaShengICRA2009}
La HM, Sheng W.
\newblock Flocking control of a mobile sensor network to track and observe a
  moving target.
\newblock In: 2009 IEEE International Conference on Robotics and Automation;
  2009. p. 3129--3134.

\bibitem{LaShengACC2009}
La HM, Sheng W.
\newblock Moving targets tracking and observing in a distributed mobile sensor
  network.
\newblock In: 2009 American Control Conference; 2009. p. 3319--3324.

\bibitem{LaShengIROS2009}
La HM, Sheng W.
\newblock Adaptive flocking control for dynamic target tracking in mobile
  sensor networks.
\newblock In: 2009 IEEE/RSJ International Conference on Intelligent Robots and
  Systems; 2009. p. 4843--4848.

\bibitem{NguyenLaTeague2015}
Nguyen MT, La HM, Teague KA.
\newblock Compressive and collaborative mobile sensing for scalar field mapping
  in robotic networks.
\newblock In: 2015 53rd Annual Allerton Conference on Communication, Control,
  and Computing (Allerton); 2015. p. 873--880.

\bibitem{LaShengChen2013}
La HM, Sheng W, Chen J.
\newblock Cooperative and active sensing in mobile sensor networks for scalar
  field mapping.
\newblock In: 2013 IEEE International Conference on Automation Science and
  Engineering (CASE); 2013. p. 831--836.

\bibitem{la2011cooperative}
La HM, Sheng W.
\newblock Cooperative sensing in mobile sensor networks based on distributed
  consensus.
\newblock In: SPIE Optical Engineering+ Applications. International Society for
  Optics and Photonics; 2011. p. 81370Y--81370Y.

\bibitem{LaSMC2009}
La HM, Nguyen TH, Nguyen CH, et~al.
\newblock Optimal flocking control for a mobile sensor network based a moving
  target tracking.
\newblock In: 2009 IEEE International Conference on Systems, Man and
  Cybernetics; 2009. p. 4801--4806.

\bibitem{la2010hybrid}
La HM, Lim RS, Sheng W.
\newblock Hybrid system of reinforcement learning and flocking control in
  multi-robot domain.
\newblock In: Proc. Conf. Theoretical Appl. Comput. Sci.; 2010. p. 7--13.

\bibitem{LaCTACIS2013}
La HM, Lim RS, Sheng W, et~al.
\newblock Cooperative flocking and learning in multi-robot systems for predator
  avoidance.
\newblock In: 2013 IEEE International Conference on Cyber Technology in
  Automation, Control and Intelligent Systems; 2013. p. 337--342.

\bibitem{Jafari2015}
Jafari M, Sengupta S, La HM.
\newblock In: Bebis G, Boyle R, Parvin B, et~al., editors. Adaptive Flocking
  Control of Multiple Unmanned Ground Vehicles by Using a UAV. Cham: Springer
  International Publishing; 2015. p. 628--637.
\newblock Available from: \url{http://dx.doi.org/10.1007/978-3-319-27863-6_58}.

\bibitem{la2015multi}
La HM.
\newblock Multi-Robot Swarm for Cooperative Scalar Field Mapping.
\newblock Handbook of Research on Design, Control, and Modeling of Swarm
  Robotics. 2015;p. 383--395.

\bibitem{la2011flocking}
La HM, Sheng W.
\newblock Flocking control algorithms for multiple agents in cluttered and
  noisy environments.
\newblock In: Bio-Inspired Self-Organizing Robotic Systems. Springer Berlin
  Heidelberg; 2011. p. 53--79.

\bibitem{carrillo2010asymptotic}
Carrillo JA, Fornasier M, Rosado J, et~al.
\newblock Asymptotic flocking dynamics for the kinetic Cucker-Smale model.
\newblock SIAM Journal on Mathematical Analysis. 2010;42(1):218--236.

\bibitem{LaLimSheng15}
La HM, Lim R, Sheng W.
\newblock Multirobot cooperative learning for predator avoidance.
\newblock IEEE Trans Control Syst Technol. 2015 Jan;23(1):52--63.

\bibitem{SepulchrePaleyLeonard07}
Sepulchre R, Paley DA, Leonard NE.
\newblock Stabilization of planar collective motion: all-to-all communication.
\newblock IEEE Trans Autom Control. 2007 May;52(5):811--824.

\bibitem{zavlanos2009hybrid}
Zavlanos MM, Tanner HG, Jadbabaie A, et~al.
\newblock Hybrid control for connectivity preserving flocking.
\newblock IEEE Transactions on Automatic Control. 2009;54(12):2869--2875.

\bibitem{dong2011flocking}
Dong W.
\newblock Flocking of multiple mobile robots based on backstepping.
\newblock IEEE Transactions on Systems, Man, and Cybernetics, Part B
  (Cybernetics). 2011;41(2):414--424.

\bibitem{MoshtaghMichaelJadbabaieDaniilidis09}
Moshtagh N, Michael N, Jadbabaie A, et~al.
\newblock Vision-based, distributed control laws for motion coordination of
  nonholonomic robots.
\newblock IEEE Trans Robotics. 2009 Aug;25(4):851--860.

\bibitem{Han_CASE}
Han TT, La HM, Dinh BH.
\newblock Flocking of mobile robots by bounded feedback.
\newblock In: 2016 IEEE International Conference on Automation Science and
  Engineering (CASE); 2016. p. 689--694.

\bibitem{Khalil02}
Khalil HK.
\newblock Nonlinear Systems.
\newblock 3rd ed. Upper Saddle River, NJ: Prentice Hall; 2002.

\bibitem{ZavlanosCDC2007}
Zavlanos MM, Jadbabaie A, Pappas GJ.
\newblock Flocking while preserving network connectivity.
\newblock In: 2007 46th IEEE Conference on Decision and Control; 2007. p.
  2919--2924.

\bibitem{nguyen2016formation}
Nguyen T, La H, Le TD, et~al.
\newblock Formation Control and Obstacle Avoidance of Multiple Rectangular
  Agents with Limited Communication Ranges.
\newblock IEEE Transactions on Control of Network Systems. 2016;.

\end{thebibliography}

\end{document}